\newcites{supp}{Supplementary References}
\newtheorem{definition}{Definition}
\newtheorem{proposition}{Proposition}
\newtheorem{lemma}{Lemma}
 \newcommand{\independent}{\perp\!\!\!\!\perp} 
\title{Interaction Measures, Partition Lattices and Kernel Tests for High-Order Interactions}
\author{
Zhaolu Liu$^{1}$
\And Robert L. Peach$^{2,3}$ 
\And Pedro A.M. Mediano$^{4}$ 
\And Mauricio Barahona$^{1}$\thanks{Corresponding author: m.barahona@imperial.ac.uk} 
\And
\\
$^{1}$Department of Mathematics, Imperial College London, United Kingdom
\\
$^{2}$Department of Neurology, University Hospital W{\"u}rzburg, Germany
\\
$^{3}$Department of Brain Sciences, Imperial College London, United Kingdom
\\
$^{4}$Department of Computing, Imperial College London, United Kingdom
}
\begin{document}

\maketitle

\begin{abstract}
Models that rely solely on pairwise relationships often fail to capture the complete statistical structure of the complex multivariate data found in diverse domains, such as socio-economic, ecological, or biomedical systems.
Non-trivial dependencies between groups of more than two variables can play a significant role in the analysis and modelling of such systems, yet extracting such high-order interactions from data remains challenging.  
Here, we introduce a hierarchy of $d$-order 
interaction measures, increasingly inclusive of possible factorisations of the joint probability distribution, and define non-parametric, kernel-based tests to establish systematically the statistical significance of $d$-order interactions.
We also establish mathematical links with lattice theory, which elucidate the derivation of the interaction measures and their composite permutation tests; clarify the connection of simplicial complexes with kernel matrix centring; and provide a means to enhance computational efficiency.
We illustrate our results numerically with validations on synthetic data, and through an application to neuroimaging data.


\end{abstract}

\section{Introduction}
There is increasing evidence that pairwise relationships are insufficient to model many real world systems~\cite{battiston2021physics, battiston2020networks, BOCCALETTI2023_PhysReports}. 
The relevance of high-order interactions has been emphasised in many contexts, as relationships within social~\cite{patania2017shape},
ecological~\cite{abrams1983arguments,mayfield2017higher},
and biological systems~\cite{yu2011higher, schneidman2006weak} frequently involve groups of three or more agents, beyond pairwise associations.
Such high-order interactions can be neither trivially represented by a linear combination of dyadic relationships, as the presence of high-order interactions can significantly impact the dynamics on networked systems~\cite{schaub2020random,carletti2020random,alvarez2021evolutionary,millan2019synchronization, arnaudon2022connecting, luff2023neuron,nurisso2023unified}, nor can they be simply detected by joint independence tests which inherently ignore other possible factorisations of the joint distribution. 

Direct measurements of group interactions are seldom available,
leading to their omission, partial representation as projected pairwise interactions~\cite{white1986structure,atkin1974mathematical,grilli2017higher}, or limited recovery through inference methods~\cite{young2021hypergraph, wang2022full}.
Often, only indirect measurements of underlying interactions in real-world complex systems are available, in the form of $iid$ or time-series data.
Methods that learn high-order interactions from time-series data have been developed~\cite{alvarez2021evolutionary,santoro2023higher}, however, their reliance on heuristic measures makes their interpretation difficult. 
Alternative methods have been developed with strong foundations in statistics~\cite{lancaster} and information theory\cite{jakulin2003quantifying, bell2003co, rosas2019quantifying}, however, as we show later, these are often based on an incomplete set of interactions and thus fail to capture all possible factorisations of the joint probability distribution~\cite{streitberg1990lancaster, ip2004structural}.



Kernel-based hypothesis tests provide a non-parametric, statistically robust framework for detecting relationships between variables from observational data. 
Such tests have been implemented for pairwise~\cite{gretton2007kernel,chwialkowski2014kernel, laumann_engproc2021005031, laumann2020non} and $d$-variate joint independence~\cite{pfister2018kernel, liu2023kernelbased}, and proven to be effective for non-trivial dependencies such as in the 
3-way Lancaster test~\cite{sejdinovic2013kernel,rubenstein2016kernel}.
However, to the best of our knowledge, cases where the number of variables exceeds 3 are still unexplored. 

Here, we extend the capability for detection of high-order interactions 
by introducing a family of tests based on factorisations of the joint probability distribution that generalise systematically to any order $d$. At the head of this family, we introduce the Streitberg interaction test, which captures all factorisations of the joint distribution of order $d$. We further show that, despite the fact that the naive extension to $d\geq4$ of the Lancaster interaction excludes some of the factorisations of the joint distribution, not all is lost, and we detail the conclusions that can be drawn from rejecting the $d$-order Lancaster interaction. Furthermore, we show that: (\textit{i}) lattice theory provides a coherent theoretical foundation for detecting high-order interactions; (\textit{ii}) interaction measures can be systemically derived from partition lattices; (\textit{iii}) the corresponding Hilbert-Schmidt norm for kernel embeddings can naturally be extracted from the product lattice; (\textit{iv}) the composite permutation tests can be performed with regard to the second level of the lattice; and (\textit{v}) the lattice formulation allows us to propose a generalised interaction measure that can be used to test whether a given factorisation can be factorised further.
Despite the inescapable combinatorial nature of testing high-order interactions, we offer approaches to reduce the computational complexity of our $d$-order interaction tests informed by our links with lattice theory. Finally, we present numerical validations of our tests on synthetic data before applying them to real-world neuroimaging data.

\section{Interaction Measures}
\label{sec:interaction_measures}
\begin{figure}[t]
\centering
\includegraphics[width=.9\textwidth]{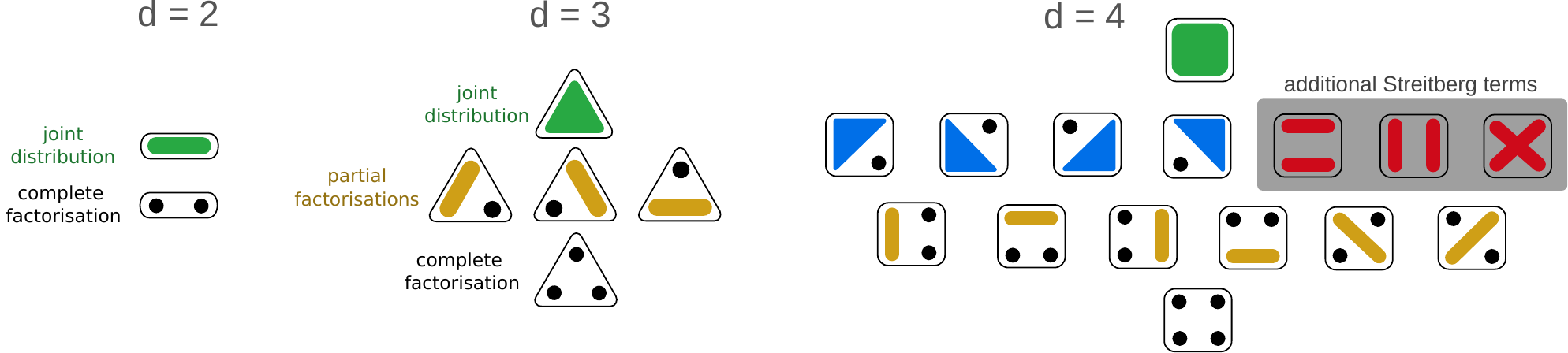}
\caption{\textbf{Factorisations of joint distributions $\mathbb{P}_{1\cdots d}$ for $d=2,3,4$.} 
The black dots indicate the marginal distributions of the singletons. The line, triangular and square shapes represent the joint distribution of two, three and four variables respectively. 
Different factorisations are presented as partitions of the $d$ variables ordered by increasing cardinality from top to bottom, so that all the factorisations with the same number of independent blocks appear at the same level. Joint independence considers only the top and bottom levels for each $d$, whilst the Lancaster interaction considers all terms except those in the shaded region. The Streitberg interaction considers all partitions.
Hence, for $d=2$, we have $\Delta_I^2 \mathbb{P}= \Delta_L^2 \mathbb{P} = \Delta_S^2 \mathbb{P}$; whereas for $d=3$, we have $\Delta_I^3 \mathbb{P} \neq \Delta_L^3 \mathbb{P} = \Delta_S^3 \mathbb{P}$, and for $d\geq 4$, we have $\Delta_I^d \mathbb{P} \neq \Delta_L^d \mathbb{P} \neq \Delta_S^d \mathbb{P}$. }
\label{fig:fig1}
\end{figure}

The most basic form of interaction between variables occurs between two variables, $X^1$ and $X^2$, and is often characterized by the lack of pairwise independence, i.e., the difference between the joint distribution $\mathbb{P}_{X^1,X^2}$ and the product of the marginal distributions $\mathbb{P}_{X^1}$ and $\mathbb{P}_{X^2}$ (see Fig.~\ref{fig:fig1}, $d=2$). Let us consider $d$ random variables $\{X^1, X^2,\ldots, X^d\}$ with joint distribution $\mathbb{P}_{X^1, \ldots, X^d} =: \mathbb{P}_{1\cdots d}$ (for readability, hereafter we use this subscript notation).
The $d$ variables are jointly independent if and only if $\mathbb{P}_{1\cdots d}$ can be factorised as the product of the univariate marginal distributions.  
%
%
Hence we can introduce an interaction measure that vanishes only when the variables are jointly independent:
\begin{equation}\label{eqn: joint_independence}
    \Delta_I^d \mathbb{P}=
    \mathbb{P}_{1\cdots d} - 
    \prod_{i=1}^d \mathbb{P}_{i} .
\end{equation}
When $d > 2$, however, the criterion for joint independence fails to capture high-order interactions, since additional factorisations must be considered. For instance, 
if $\mathbb{P}_{123}$ could be factorised as $\mathbb{P}_{1}\mathbb{P}_{23}$ (see Fig.~\ref{fig:fig1}, $d=3$), joint independence would be rejected, yet there is no $3$-way interaction. Instead, we seek a way of identifying high-order interactions for $d$ variables where all lower order independencies can be rejected.


One approach to address the shortcomings of joint independence in identifying high-order interactions for $d>2$ variables is to use a signed measure called the Lancaster interaction~\cite{sejdinovic2013kernel}.
The Lancaster interaction for $d=3$ is defined as $\Delta_L^3 \mathbb{P} = \mathbb{P}_{123}-\mathbb{P}_{1}\mathbb{P}_{23}-\mathbb{P}_{2}\mathbb{P}_{13} - \mathbb{P}_{3}\mathbb{P}_{12}+2\mathbb{P}_{1}\mathbb{P}_{2}\mathbb{P}_{3}$.
$\Delta^3_L \mathbb{P}$ vanishes if $\mathbb{P}_{123}$ can be factorised in any way, and has been implemented as a kernel-based test statistic to identify non-factorisable joint distributions~\cite{sejdinovic2013kernel, rubenstein2016kernel}. Lancaster also generalised this measure to the multivariate case with $d$ variables:
\begin{equation}\label{eqn: lancaster_interaction}
    \Delta_L^d \mathbb{P}=\prod_{i=1}^d\left(\mathbb{P}_{i}^*-\mathbb{P}_{i}\right) , 
\end{equation}
where $\mathbb{P}_{i}^*\mathbb{P}_{j}^*\cdots \mathbb{P}_{k}^* = \mathbb{P}_{ij\cdots k}$. 
The Lancaster interaction~\eqref{eqn: lancaster_interaction} vanishes when the joint distribution can be factorised into jointly independent subvectors for $d=3$, but it fails when $d\geq 4$~\cite{streitberg1990lancaster}. Specifically, $\Delta^4_L P$ does not vanish if $\mathbb{P}_{1234}$ factorises into $\mathbb{P}_{12}\mathbb{P}_{34}$, $\mathbb{P}_{13}\mathbb{P}_{24}$, or $\mathbb{P}_{14}\mathbb{P}_{23}$. 
Despite this, the Lancaster interaction is not entirely uninformative for $d\geq4$, and in Section~\ref{sec: lanacaster_interaction} we examine the necessary conditions for it to vanish.

The desired vanishing property can be achieved by using the Streitberg interaction~\cite{streitberg1990lancaster}, a more general interaction measure defined using partitions. Let $D$ be the set of random variables $\{X^1, X^2, \ldots, X^d\}$, and let $\Pi(D)$ denote the set of all partitions of $D$, where a partition $\pi$ is a collection of nonempty, pairwise disjoint subsets (blocks) $b_j \subseteq D$ that cover $D$. Then, the Streitberg interaction measure is defined as
\begin{equation}\label{eqn: streitberg_interaction}
\Delta_S^d \mathbb{P} = \sum_{\pi \in \Pi(D)} \left(|\pi| - 1\right)! \, (-1)^{|\pi| - 1} \mathbb{P}_{\pi} .
\end{equation}
Here, $|\pi|$ denotes the cardinality of 
the partition $\pi$, and $\mathbb{P}_{\pi} = \prod_{j=1}^r \mathbb{P}_{b_j}$ is the corresponding factorisation with respect to $\pi = b_1\vert b_2\vert\ldots\vert b_r$. It has been proven that $\Delta_S^d \mathbb{P} = 0$ if the joint distribution can be factorised in any way, although the converse is not true in general~\cite{sejdinovic2013kernel}.


\section{Partition Lattices}

The expressions of the interaction measures outlined in the previous section can be systematically generated from partition lattices. Interestingly, this formulation also allows us to establish further theoretical links with simplicial complexes.


\paragraph{Notation.} A partially ordered set defined on a set $S$ with a binary relation $\leq$ is a lattice $\mathcal{L}$ if for any $\sigma, \pi\in\mathcal{L}$ there exists a greatest lower bound (meet) $\sigma\wedge \pi$ and least upper bound (join) $\sigma\vee \pi$~\cite{birkhoff1940lattice}. We denote the maximum and minimum element of a lattice as $\hat{1}$ and $\hat{0}$.  Let $\Delta{(\cdot)}$ denote a real-valued function defined on $S$, then the sum function $f(\cdot)$ is analogous to the integration of $\Delta(\cdot)$ over the interval $[\hat{0}, \pi]$, where $\pi \in \mathcal{L}$. Then $\Delta(\cdot)$ can be expressed as the inverse operation of $f(\cdot)$:
\begin{equation}
\label{eq:moebius}
f{(\pi)}
= \sum \zeta(\sigma, \pi) \, \Delta(\sigma)
,\qquad
\Delta{(\pi)}=\sum_{\sigma \leq \pi} \mu(\sigma, \pi) \, f{(\sigma)},
\end{equation}
where the partial order is encoded by the Zeta matrix, with $\zeta(\sigma, \pi) = 1$ if $\sigma\leq\pi$ and 0 otherwise. Its inverse is the M\"obius matrix with elements $\mu(\sigma, \pi)$ (for details see Section~\ref{sec: mobius}), which can be obtained explicitly~\cite{speed1983cumulants}. 

\paragraph{Lattices and Interaction Measures.}

The construction of the interaction measures in Section~\ref{sec:interaction_measures} is closely related to the partition lattice~\cite{streitberg1990lancaster,speed1983cumulants}.
The partition lattice is defined on the set
$\Pi(D)$ with ordering given by the notion of partition refinement. 
A partition $\sigma$ is said to refine another partition $\pi$, denoted as  $\sigma \preceq \pi$, if every block of $\sigma$ is fully contained within a block of $\pi$. 
The lattice structure thus allows us to define the least upper bound $\sigma \vee \pi$ and the greatest lower bound $\sigma \wedge \pi$ between any two partitions $\sigma$ and $\pi$. 
Clearly, the maximum element of the partition lattice, $\hat{1}$, corresponds to the joint distribution, whereas the minimum element, $\hat{0}$, corresponds to the complete factorisation (Fig.~\ref{fig:fig1}).



Within this formalism, the interaction measures are obtained when the sum function $f(.)$ in~\eqref{eq:moebius} is the probability distribution function. It then follows that the interaction measures are obtained from the inverse operation. Indeed, the Streitberg interaction~\eqref{eqn: streitberg_interaction} is given by the M\"obius inversion defined over the complete partition lattice.
In contrast, the Lancaster interaction is obtained when the inversion is defined over the subset of partitions that have at most one non-singleton block, which we denote as the \emph{Lancaster sublattice}. In other words, the sum function associated with the Lancaster interaction considers fewer interactions than the Streitberg interaction
and its M\"obius inversion has correspondingly fewer terms, as seen in~\eqref{eqn: lancaster_interaction}.
Note that for $d=3$, the full (Streitberg) lattice and the Lancaster sublattice are the same, hence the interaction measures coincide. For $d\geq4$, however, the Streitberg and Lancaster lattices, and consequently the corresponding interaction measures, are different,
as shown by the extra partitions with two non-singleton blocks (shaded region) in Figure~\ref{fig:fig1}.
Note also that joint independence considers a  trivial sublattice with only two elements: $\hat{0}$ and $\hat{1}$ (for any $d$). The M\"obius inversion on this sublattice leads to~\eqref{eqn: joint_independence}, which only vanishes for complete factorisations.



\paragraph{Links to Simplicial Complexes.}

A popular representation of high-order systems in the literature is through simplicial complexes~\cite{bianconi2021higher}. 
Importantly, the simplicial complex construction can also be understood in terms of partition lattices. In particular, the elements in a ($d-1$)-simplex have inclusion ordering 
and thus form a subset lattice, e.g., $\{X^1, X^2\}$ is a subset of $\{X^1, X^2, X^3\}$. The subset lattice has been utilised in the understanding of information geometry~\cite{amari2001information} and solving tensor balancing~\cite{sugiyama2017tensor}. It can be shown that the deatomised sublattice (with removal of singletons) is isomorphic to the Lancaster lattice~\cite{ip2003some}. Furthermore, the boundary matrices of a ($d-1$)-simplex appear as block matrices both in the Zeta matrix 
and the (inverse) M\"obius matrix of the $d$-order partition lattice. These are explored further in Section~\ref{sec: simplicial complex} in the SI.

Note that the non-singleton partitions, i.e., those that do not belong to the Lancaster sublattice, are not in the set of simplicial complexes.  Hence these blocks and their respective refinements \emph{cannot} be expressed in terms of boundary matrices. Therefore the full partition lattice and its resulting Streitberg interaction provides more information compared with measures originating from sublattices, and in particular the Lancaster lattice and its related simplicial complex construction~\cite{jakulin2003quantifying, bell2003co}. 
Whilst simplicial complexes, and similarly the Lancaster interaction, can be recursively constructed from lower order elements, this is not possible for the Streitberg interaction due to the partitions without singletons, 
which has implications on computational efficiency (Section~\ref{sec: computational}).

\section{Kernel Interaction Tests}

The interaction measures in Section~\ref{sec:interaction_measures} can be utilised as statistics in non-parametric tests when embedded into reproducing kernel Hilbert spaces (RKHS). 
Given a symmetric, positive definite function $k^i: \mathcal{X}^i \times \mathcal{X}^i \rightarrow \mathbb{R}$, there is an associative RKHS $\mathcal{H}^i$ with the reproducing kernel property. 
For $X^i\in \mathcal{X}^i$, we denote $\phi^i(\cdot)$ as the canonical map of $k^i(X^i,\cdot)$. The kernel mean embedding of $\mathbb{P}_{X^i}$, $\mu_{\mathbb{P}_{X^i}}$, satisfies $\mathbb{E}_{X^i} f({X^i})=\left\langle f, \mu_{\mathbb{P}_{X^i}}\right\rangle_\text{HS}$, where HS stands for Hilbert-Schmidt. If the kernel is characteristic, the mean embedding is injective and the norm of the signed measure is zero if and only if the measure is zero itself~\cite{gretton2007kernel, sejdinovic2013kernel,gretton2012kernel}. These properties enable us to create meaningful non-parametric tests by computing the kernel mean embedding of desired interaction measures.


In this section, we extend interaction measures for random variables to the $d$-variate case, and formulate a family of interaction tests including the Streitberg interaction, the Lancaster interaction, joint independence and a generalised interaction.

For proofs of the propositions, please see Appendix~\ref{sec: proof}.

\subsection{Lancaster Interaction}\label{sec: lanacaster_interaction}
Let us first consider the Lancaster interaction. Although it does not necessarily vanish when $d\geq4$ for all types of factorisations due to the lack of certain partitions, here we find the necessary conditions for the Lancaster interaction to vanish.
\begin{proposition}\label{prop: lan_vanish_con}
    If the joint distribution $\mathbb{P}_{1\cdots d}$ can be factorised into $\mathbb{P}_{\pi_v}$ where $\pi_v$ are partitions with at least one singleton, then $\Delta_L^d \mathbb{P} = 0$.
\end{proposition}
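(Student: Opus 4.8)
The claim is that if $\mathbb{P}_{1\cdots d}$ factorises as $\mathbb{P}_{\pi_v}$ for some partition $\pi_v$ that contains at least one singleton block, then the Lancaster interaction $\Delta_L^d \mathbb{P} = \prod_{i=1}^d(\mathbb{P}_i^* - \mathbb{P}_i)$ vanishes. The natural approach is to exploit the product structure of~\eqref{eqn: lancaster_interaction} directly. Say the singleton block of $\pi_v$ is $\{X^j\}$, so that $\mathbb{P}_{1\cdots d} = \mathbb{P}_j \cdot \mathbb{P}_{D\setminus\{j\}}^{\pi_v'}$, where $\pi_v'$ is the partition of $D\setminus\{j\}$ induced by restricting $\pi_v$. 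The key observation should be that because $X^j$ is independent of the rest, the starred marginal $\mathbb{P}_j^*$ — which by definition is the factor of $\mathbb{P}_{1\cdots d}$ attached to index $j$ — simply equals $\mathbb{P}_j$. Hence the $j$-th factor in the product $\prod_{i=1}^d(\mathbb{P}_i^* - \mathbb{P}_i)$ is zero, and the whole product vanishes.

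**Making the starred-notation argument precise.** The main thing to pin down carefully is the meaning of $\mathbb{P}_i^*$ and why $\mathbb{P}_j^* = \mathbb{P}_j$ when $\{X^j\}$ is a singleton block. The paper defines the starred marginals through the relation $\mathbb{P}_{i}^*\mathbb{P}_{j}^*\cdots \mathbb{P}_{k}^* = \mathbb{P}_{ij\cdots k}$, i.e., for each block $b = \{i, j, \ldots, k\}$ of the factorising partition, the product of the starred singletons over that block reproduces the joint marginal on that block. Expanding $\Delta_L^d\mathbb{P} = \prod_{i=1}^d(\mathbb{P}_i^* - \mathbb{P}_i)$ yields a signed sum over subsets $A \subseteq D$ of terms $(-1)^{|D\setminus A|}\prod_{i\in A}\mathbb{P}_i^* \prod_{i\notin A}\mathbb{P}_i$, and the product-over-blocks rule lets each $\prod_{i\in A}\mathbb{P}_i^*$ be rewritten as a product of joint marginals over the blocks of $\pi_v$. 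I would then show that the terms pair up and cancel: for the singleton block $\{j\}$, the substitution $\mathbb{P}_j^* \mapsto \mathbb{P}_j$ is exact (a block of size one contributes only its own starred marginal, which must equal the true marginal $\mathbb{P}_j$), so the factor $(\mathbb{P}_j^* - \mathbb{P}_j)$ is identically the zero measure. Concretely, I would factor the product as $\Delta_L^d\mathbb{P} = (\mathbb{P}_j^* - \mathbb{P}_j)\prod_{i\neq j}(\mathbb{P}_i^* - \mathbb{P}_i) = 0$.

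**Relating to the lattice picture (optional reinforcement).** As an alternative or a sanity check, I would phrase this through the Möbius-inversion description given earlier: $\Delta_L^d\mathbb{P}$ is the Möbius inversion of the distribution over the Lancaster sublattice, and it vanishes iff $\mathbb{P}_{1\cdots d}$ is constant (as a sum function) on the interval $[\hat 0, \pi_v]$ in that sublattice. A factorisation with a singleton block means the joint distribution agrees with $\mathbb{P}_{\pi_v}$, and one checks that all partitions $\sigma \preceq \pi_v$ lying in the Lancaster sublattice give the same factorised distribution, forcing the inversion to be zero. This is essentially the same content dressed in lattice language, and whichever version is cleaner in the paper's conventions is the one I would present.

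**Anticipated obstacle.** The only real subtlety is bookkeeping around the definition of the starred marginals: one must argue that $\mathbb{P}_j^*$ is well-defined and equals $\mathbb{P}_j$ precisely when $\{X^j\}$ appears as its own block, rather than being entangled with other variables. Once the convention $\mathbb{P}_{b}$-as-product-of-starred-singletons is unwound on the singleton block, the result is immediate; the risk is hand-waving this step, so I would state it as a short lemma or explicit line: \emph{if $\pi_v$ has a block $\{j\}$, then by the defining relation $\mathbb{P}_j^* = \mathbb{P}_j$, whence the $j$-th factor of~\eqref{eqn: lancaster_interaction} is null}. Everything else is a one-line consequence of the product vanishing when any factor does.
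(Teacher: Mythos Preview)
Your argument is correct and takes a more direct route than the paper. You exploit the product form $\Delta_L^d\mathbb{P}=\prod_i(\mathbb{P}_i^*-\mathbb{P}_i)$: once $\{j\}$ is a block of $\pi_v$, independence of $X^j$ from the remaining variables gives $\mathbb{P}_A=\mathbb{P}_j\,\mathbb{P}_{A\setminus\{j\}}$ for every $A\ni j$, so in the star calculus $\mathbb{P}_j^*$ behaves identically to $\mathbb{P}_j$ in every product and the $j$-th factor vanishes (equivalently, the terms in the subset expansion pair off and cancel, as you note). The paper instead first isolates as a lemma the special case $\mathbb{P}_{1\cdots d}=\mathbb{P}_j\,\mathbb{P}_{D\setminus\{j\}}$ and proves it by a combinatorial cancellation on the Lancaster sublattice: under this factorisation every partition $\sigma$ not refining $j\mid D\setminus\{j\}$ collapses to $\sigma\wedge(j\mid D\setminus\{j\})$ one level down, and the alternating M\"obius signs force cancellation level by level, with the residual $(d-1)$ copies of $\hat 0$ absorbing the special coefficient $(-1)^{d-1}(d-1)$. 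For general $\pi_v$ the paper then coarsens to a partition $\pi_s$ with no singleton blocks, invokes the multiplicativity $\Delta_L^d\mathbb{P}=\prod_{b\in\pi_s}\Delta_L^b\mathbb{P}$ over independent blocks, and applies the lemma inside the block that re-absorbed the singleton. Your route is shorter and needs no lattice bookkeeping; the paper's route is more elaborate but keeps the argument inside the M\"obius-inversion framework that organises the rest of the paper. The caveat you flag about the star notation is exactly the right one, and your pairing argument is what makes the formal identity $(\mathbb{P}_j^*-\mathbb{P}_j)=0$ rigorous under the independence assumption.
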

\textit{\underline{Remark:}} Note that $\mathbb{P}_{\pi_v}$ is a broader set of partitions compared to the set of partitions in the Lancaster lattice, e.g., for $d=5$, $\mathbb{P}_{12}\mathbb{P}_{34}\mathbb{P}_5$ satisfies Proposition~\ref{prop: lan_vanish_con}, yet it is not a constituent of the Lancaster lattice, which only consists of partitions with at most one non-singleton. 

We now define the Mixed Central Moment Operator as the kernel embedding of the Lancaster interaction, which follows immediately from the expansion of~\eqref{eqn: lancaster_interaction}~\cite{ip2004structural}:
\begin{definition}[Mixed Central Moment Operator]
    \begin{equation}
    \label{eq:mixed_central_moment_operator_def}
        \mathcal{M}_d = (-1)^{n-1}(n-1) \, 
        \mathbb{E} \left[\prod_{i=1}^d \phi^i \right ] 
        +\sum_{\pi_\ell \neq \hat{0}} (-1)^{|\pi_\ell|-1} \, 
        \mathbb{E}\left[\prod_s\phi^s\right]
        \prod_j\mathbb{E}\left[\phi^j\right],
    \end{equation}
    where $\pi_\ell$ are the set of partitions with at most one non-singleton (i.e., those that belong to the Lancaster lattice), $s$ are the singletons, and $j$ runs over variables in the non-singleton block.
\end{definition}
\begin{proposition}\label{prop: lan_operator_simplify} By rearranging, the Mixed Central Moment Operator can be simplified to:
    \begin{equation}
        \mathcal{M}_d = \mathbb{E}\left\{\prod_{i=1}^{d}\left[\phi^i-\mathbb{E}[\phi^i]\right]\right\}.
    \end{equation}
\end{proposition}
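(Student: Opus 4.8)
The plan is to expand the compact right-hand side by multilinearity and recognise it, term by term, as the explicit sum defining $\mathcal{M}_d$ in~\eqref{eq:mixed_central_moment_operator_def}. Two facts do the work: (i) the maps $\phi^i$ take values in distinct RKHSs, so $\prod_{i=1}^d[\phi^i-\mathbb{E}[\phi^i]]$ is really the tensor product $\bigotimes_{i=1}^d(\phi^i-\mathbb{E}[\phi^i])$ in $\bigotimes_{i=1}^d\mathcal{H}^i$, and a tensor product expands by multilinearity; (ii) each mean embedding $\mathbb{E}[\phi^i]$ is a deterministic element of $\mathcal{H}^i$, hence it factors out of any expectation, while for a block $B\subseteq\{1,\dots,d\}$ the quantity $\mathbb{E}[\prod_{i\in B}\phi^i]$ is exactly the joint kernel embedding of $\mathbb{P}_B$ --- the very quantities obtained by embedding the expanded Lancaster measure~\eqref{eqn: lancaster_interaction} under the rules $\mathbb{P}_{i\cdots k}\mapsto\mathbb{E}[\phi^i\cdots\phi^k]$ and $\mathbb{P}_i\mapsto\mathbb{E}[\phi^i]$.

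First I would expand: for each index $i$, choose the factor $\phi^i$ or the factor $-\mathbb{E}[\phi^i]$, index the choices by the set $S\subseteq\{1,\dots,d\}$ of positions where $\phi^i$ is retained, and pull the deterministic factors $\mathbb{E}[\phi^j]$, $j\notin S$, out of the expectation. This gives
\begin{equation*}
\mathbb{E}\!\left\{\prod_{i=1}^{d}\!\left[\phi^i-\mathbb{E}[\phi^i]\right]\right\}=\sum_{S\subseteq\{1,\dots,d\}}(-1)^{d-|S|}\,\mathbb{E}\!\left[\prod_{i\in S}\phi^i\right]\prod_{j\notin S}\mathbb{E}[\phi^j].
\end{equation*}
Next I would sort the sum by $|S|$. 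Every $S$ with $|S|\ge 2$ corresponds bijectively to the partition in the Lancaster sublattice whose unique non-singleton block is $S$; that partition has $|\pi_\ell|=d-|S|+1$ blocks, so $(-1)^{d-|S|}=(-1)^{|\pi_\ell|-1}$, the joint embedding $\mathbb{E}[\prod_{i\in S}\phi^i]$ comes from the block and $\prod_{j\notin S}\mathbb{E}[\phi^j]$ from its singletons --- hence these terms reproduce exactly the sum $\sum_{\pi_\ell\neq\hat 0}(\dots)$ of~\eqref{eq:mixed_central_moment_operator_def}, with the case $S=\{1,\dots,d\}$ (the top element $\hat 1$) appearing with coefficient $+1$ and empty complement, as it should.

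It then remains to collect the terms with $|S|\le 1$, which are precisely the ones absent from the Lancaster sublattice sum (that sum omits $\hat 0$). Each of them is a scalar multiple of the complete-factorisation embedding $\prod_{i=1}^d\mathbb{E}[\phi^i]$: there are $d$ terms with $|S|=1$, each of sign $(-1)^{d-1}$, and one term with $S=\emptyset$ of sign $(-1)^d$, so together they give $\bigl(d(-1)^{d-1}+(-1)^d\bigr)\prod_{i=1}^d\mathbb{E}[\phi^i]=(-1)^{d-1}(d-1)\prod_{i=1}^d\mathbb{E}[\phi^i]$, which is the leading term of~\eqref{eq:mixed_central_moment_operator_def} with $n=d$. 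This completes the identification. There is no real obstacle; the only care needed is bookkeeping --- checking the sign identity against $|\pi_\ell|=d-|S|+1$, and verifying that the scattered $|S|\le 1$ terms coalesce into the single coefficient $(-1)^{n-1}(n-1)$, which multiplies the complete-factorisation embedding $\prod_i\mathbb{E}[\phi^i]$. I would also remark that, by linearity of the mean embedding and the multiplicative rule $\mathbb{P}_B\mapsto\mathbb{E}[\prod_{i\in B}\phi^i]$, embedding the factored form $\Delta_L^d\mathbb{P}=\prod_{i=1}^d(\mathbb{P}_i^*-\mathbb{P}_i)$ directly already produces $\mathbb{E}[\prod_{i=1}^d(\phi^i-\mathbb{E}[\phi^i])]$ in one line; but the term-by-term expansion is what actually certifies the stated Definition, so I would present that as the main argument.
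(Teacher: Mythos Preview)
Your proposal is correct and follows essentially the same route as the paper: expand the product $\prod_i(\phi^i-\mathbb{E}[\phi^i])$ (the paper does this at the level of the Lancaster measure $\prod_i(\mathbb{P}_i^*-\mathbb{P}_i)$ and then embeds), match each subset $S$ with $|S|\ge 2$ to the Lancaster-lattice partition whose non-singleton block is $S$ with sign $(-1)^{|\pi_\ell|-1}$, and observe that the $|S|\le 1$ terms all collapse to the complete-factorisation embedding (because $\mathbb{P}_i^*=\mathbb{P}_i$ for a singleton) and combine to the coefficient $(-1)^{d-1}(d-1)$. Your bookkeeping is more explicit than the paper's rather terse sketch, but the combinatorial content is identical.
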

\textit{\underline{Remark:}} This simplification transparently re-expresses the operator as a central moment, instead of the complex sum of moments in~\eqref{eq:mixed_central_moment_operator_def}. The simplification eliminates all partitions except $\hat{1}$.

We now define the entries in matrix $K^i_{ab}=k^i(x^i_a, x^i_b)$ for $iid$ samples $x^i_a$ and $x^i_b$ where $1\leq a,b\leq n$ and $\tilde{K}^i = HK^iH$ where $H=I-\frac{1}{n} 11^{\top}$ is the centring matrix. The norm of the embedding above can serve as a test statistic. The estimator of the test statistic is derived as the V-statistic: 
\begin{proposition}[Lancaster interaction estimator.]\label{prop: lan_teststats}
    \begin{equation}
        ||\hat{\mathcal{M}}_d||^2_{\mathcal{HS}} = \frac{1}{n^2} \sum_{a=1}^n \sum_{b=1}^n \prod_{i=1}^d \tilde{K}^i_{ab},
    \end{equation}
\end{proposition}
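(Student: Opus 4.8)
The plan is to start from the simplified expression for the Mixed Central Moment Operator given in Proposition~\ref{prop: lan_operator_simplify}, namely $\mathcal{M}_d = \mathbb{E}\left\{\prod_{i=1}^d\left(\phi^i-\mathbb{E}[\phi^i]\right)\right\}$, and to view it as an element of the tensor product Hilbert space $\mathcal{H}^1\otimes\cdots\otimes\mathcal{H}^d$, which is itself an RKHS with reproducing kernel $\prod_{i=1}^d k^i$. The squared Hilbert--Schmidt norm is the inner product of $\mathcal{M}_d$ with itself, which I would expand using two independent copies $X=(X^1,\dots,X^d)$ and $Y=(Y^1,\dots,Y^d)$ of the random vector:
\[
\|\mathcal{M}_d\|^2_{\mathcal{HS}} = \mathbb{E}_X \mathbb{E}_Y \prod_{i=1}^d \big\langle \phi^i(X^i)-\mathbb{E}[\phi^i],\ \phi^i(Y^i)-\mathbb{E}[\phi^i]\big\rangle_{\mathcal{H}^i},
\]
where the product over $i$ factors out of the tensor-product inner product componentwise, and the expectations exchange with the (bounded) inner product, as already licensed by the characteristic/boundedness assumptions invoked in the text.

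Next I would evaluate each component inner product by the reproducing property, obtaining the centred kernel
\[
\big\langle \phi^i(X^i)-\mathbb{E}[\phi^i],\ \phi^i(Y^i)-\mathbb{E}[\phi^i]\big\rangle_{\mathcal{H}^i} = k^i(X^i,Y^i) - \mathbb{E}_{X^i}k^i(X^i,Y^i) - \mathbb{E}_{Y^i}k^i(X^i,Y^i) + \mathbb{E}_{X^i,Y^i}k^i(X^i,Y^i).
\]
Passing to the estimator amounts to replacing every expectation by the corresponding empirical average over the $n$ $iid$ samples (the plug-in V-statistic): $\mathbb{E}_X\mapsto \tfrac1n\sum_a$, $\mathbb{E}_Y\mapsto\tfrac1n\sum_b$, and likewise for the inner marginal expectations. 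The key algebraic observation is that the plug-in version of the centred kernel evaluated at $(x^i_a,x^i_b)$ equals
\[
K^i_{ab} - \tfrac1n\textstyle\sum_{c}K^i_{ac} - \tfrac1n\textstyle\sum_{c}K^i_{cb} + \tfrac1{n^2}\textstyle\sum_{c,e}K^i_{ce} = \big(H K^i H\big)_{ab} = \tilde{K}^i_{ab},
\]
with $H=I-\tfrac1n 11^\top$, which one checks by direct expansion of $\sum_{c,e}(\delta_{ac}-\tfrac1n)K^i_{ce}(\delta_{eb}-\tfrac1n)$. Substituting into the empirical analogue of the first display yields $\|\hat{\mathcal{M}}_d\|^2_{\mathcal{HS}} = \tfrac1{n^2}\sum_{a,b}\prod_{i=1}^d \tilde{K}^i_{ab}$, as claimed.

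The main obstacle is bookkeeping rather than conceptual: one must justify the factorisation of the HS inner product over the tensor-product RKHS together with the interchange of expectation and inner product, and then verify carefully that the naive plug-in of all four expectation terms of the centred kernel collapses exactly to the double-centred Gram matrix $H K^i H$. A minor point worth stating is that this is a V-statistic, so the sums over $a,b$ (and the inner centring sums) are unrestricted and no diagonal removal or bias correction is performed, consistent with the definition of $\tilde{K}^i$ and with the statement of Proposition~\ref{prop: lan_teststats}.
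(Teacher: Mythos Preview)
Your proposal is correct and follows essentially the same route as the paper. The paper's proof is a terse three-line computation that writes the empirical operator as $\hat{\mathcal{M}}_d=\tfrac{1}{n}\sum_a\prod_i\tilde{\phi}^i(x^i_a)$ with $\tilde{\phi}^i=\phi^i-\mathbb{E}[\phi^i]$, takes its HS inner product with itself, factors over the tensor product, and identifies $\langle\tilde{\phi}^i(x^i_a),\tilde{\phi}^i(x^i_b)\rangle=\tilde{K}^i_{ab}$; you do the same, only making the population-to-empirical passage and the $HK^iH$ identity explicit rather than implicit.
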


\subsection{Streitberg Interaction}\label{sec:streitberg_interaction}

Similarly we define the Mixed Cumulant operator in terms of kernel embeddings from the Streitberg interaction as:
\begin{definition}[Mixed Cumulant operator]
    \begin{equation}
        \mathcal{K}_d = \sum_{\pi\in\Pi(D)} (|\pi|-1)!(-1)^{|\pi|-1}\prod_{b\in\pi}\mathbb{E}\left\{\prod_{i\in b}\phi^i\right\},
    \end{equation}
    where $i$ is an element in block $b$ of partition $\pi$ in partition lattice $\Pi(D)$.
\end{definition}
This operator can be simplified in a similar way: 
\begin{proposition}\label{prop: streitberg_simplify}
    \begin{equation}
        \mathcal{K}_d = \sum_{\pi_s\in\Pi(D)} (|\pi_s|-1)!(-1)^{|\pi_s|-1}\prod_{b\in\pi_s}\mathbb{E}\left\{\prod_{i\in b}^{}\left[\phi^i-\mathbb{E}[\phi^i]\right]\right\},
    \end{equation}
    where $\pi_s$ are the partitions with no singletons in partition lattice $\Pi(D)$.
\end{proposition}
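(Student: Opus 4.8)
The plan is to show that the two expressions for $\mathcal{K}_d$ are equal by verifying that the "extra" terms appearing in the original definition — namely all partitions $\pi$ that contain at least one singleton block — cancel out once the central-moment substitution $\phi^i \mapsto \phi^i - \mathbb{E}[\phi^i]$ is introduced for the no-singleton partitions $\pi_s$. Concretely, I would start from the simplified right-hand side, expand each factor $\mathbb{E}\{\prod_{i\in b}(\phi^i - \mathbb{E}[\phi^i])\}$ over a no-singleton block $b$ using multilinearity of the expectation, and collect the resulting monomials. Each such expansion of a block of size $|b|$ produces a signed sum indexed by the subsets of that block that are "kept uncentred", with the complementary elements contributing factors $-\mathbb{E}[\phi^j]$; grouping these contributions over all blocks of $\pi_s$ re-generates exactly the moment products $\prod_{b'\in\pi}\mathbb{E}\{\prod_{i\in b'}\phi^i\}$ indexed by coarser-or-equal data but now ranging over \emph{all} partitions $\pi$, including those with singletons.

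The key combinatorial step is to identify, for a fixed target partition $\pi \in \Pi(D)$ (allowed to have singletons), the coefficient with which the monomial $\prod_{b\in\pi}\mathbb{E}\{\prod_{i\in b}\phi^i\}$ arises from the simplified sum, and to check it equals $(|\pi|-1)!(-1)^{|\pi|-1}$. This reduces to a statement purely about the partition lattice: summing $(|\pi_s|-1)!(-1)^{|\pi_s|-1}$ times the appropriate sign/multiplicity over all no-singleton partitions $\pi_s$ that "project down" to $\pi$ after re-merging the uncentred singleton contributions. I expect this to be precisely the known Möbius-type identity relating the partition lattice to its no-singleton sublattice (an analogue of the identity, used in classical cumulant theory, that expresses cumulants of centred variables in terms of cumulants of the original variables, or equivalently the statement that $\Delta_S^d$ depends only on the connected/no-singleton part of the lattice); it can be proved by induction on $d$ or by an explicit generating-function / exponential-formula argument. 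Since $\mathcal{K}_d$ is exactly the kernel embedding of the Streitberg interaction $\Delta_S^d\mathbb{P}$ from~\eqref{eqn: streitberg_interaction}, one may alternatively lift the already-known distributional identity (that $\Delta_S^d\mathbb{P}$ can be written over no-singleton partitions with centred marginals) through the injective mean embedding, which turns the problem into the scalar combinatorial lemma just described.

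The main obstacle is the bookkeeping in this cancellation: one must correctly track how a singleton $\{i\}$ in the target partition $\pi$ can be produced \emph{either} as a genuine singleton block that was discarded in passing to $\pi_s$, \emph{or} as an uncentred $-\mathbb{E}[\phi^i]$ term split off from a larger block of $\pi_s$, and verify that the signs $(-1)^{|\pi_s|-1}$ together with the factorials $(|\pi_s|-1)!$ combine to give the clean coefficient $(|\pi|-1)!(-1)^{|\pi|-1}$ for every $\pi$. I would organise this by conditioning on the "core" no-singleton partition obtained from $\pi$ by deleting its singleton blocks, and summing over the ways the singletons attach; the factorial weights then telescope via the recursion $(m-1)! = \sum_{j} \binom{m-2}{j-1}(j-1)!(m-j-1)!$-type identities for the number of blocks. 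Once the coefficient of each $\pi$ is shown to match~\eqref{eqn: streitberg_interaction}, equality of the two operator expressions follows immediately.
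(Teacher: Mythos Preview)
Your approach is correct and coincides with what the paper itself calls the ``manual expansion and then equating the terms'' route, which it acknowledges as valid. However, the proof the paper actually presents is quite different and essentially two lines: it invokes Lemma~\ref{lem: streitberg_lancaster} (the relation $\mathcal{K}_d = \sum_{\pi_s}(|\pi_s|-1)!(-1)^{|\pi_s|-1}\prod_{b\in\pi_s}\mathcal{M}_b$, taken from the literature) and then substitutes the already-established simplification $\mathcal{M}_b = \mathbb{E}\{\prod_{i\in b}(\phi^i-\mathbb{E}[\phi^i])\}$ from Proposition~\ref{prop: lan_operator_simplify}. Nothing further is needed.

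What your direct expansion buys is self-containment: you do not have to import the Lancaster--Streitberg (moment--cumulant) relation as a black box. What you lose is brevity, since the coefficient-matching identity you outline is precisely the combinatorial content of Lemma~\ref{lem: streitberg_lancaster}, so you are in effect reproving that lemma inside the proposition. Two small slips in your write-up are worth flagging: the partitions $\pi$ obtained by expanding a fixed $\pi_s$ are \emph{refinements} of $\pi_s$, not ``coarser-or-equal''; and since every $\pi_s$ in the sum is singleton-free, there is no ``genuine singleton block discarded in passing to $\pi_s$'' --- every singleton of the target $\pi$ must come from the expansion (either a $-\mathbb{E}[\phi^i]$ factor or a choice $|S_b|\le 1$). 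Neither slip breaks the argument, but they would need to be corrected before the bookkeeping goes through cleanly.
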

\textit{\underline{Remark:}} Note that partitions with singletons are eliminated after centring, and the remaining partitions without singletons form an upper semi-lattice. The relationship between the Mixed Cumulant operator and the Mixed Central Moment operator is given by:
\begin{lemma}\label{lem: streitberg_lancaster}
The Mixed Cumulant operator is equal to the sum of Mixed Central Moment Operator products associated with partitions with no singletons denoted as $\pi_s$~\cite{ip2004structural}:
    \begin{equation}
        \mathcal{K}_d = \sum_{\pi_s\in\Pi(D)} (|\pi_s|-1)!(-1)^{|\pi_s|-1} \prod_{b\in \pi_s} \mathcal{M}_b .
    \end{equation}
\end{lemma}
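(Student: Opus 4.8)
The plan is to prove Lemma~\ref{lem: streitberg_lancaster} by substituting the simplified form of the Mixed Cumulant operator from Proposition~\ref{prop: streitberg_simplify} and the simplified form of the Mixed Central Moment operator from Proposition~\ref{prop: lan_operator_simplify}, and then checking that the two expressions agree term by term. Recall that Proposition~\ref{prop: streitberg_simplify} gives $\mathcal{K}_d = \sum_{\pi_s} (|\pi_s|-1)!(-1)^{|\pi_s|-1}\prod_{b\in\pi_s}\mathbb{E}\{\prod_{i\in b}[\phi^i-\mathbb{E}[\phi^i]]\}$, where the sum runs over partitions of $D$ with no singleton blocks. On the other hand, Proposition~\ref{prop: lan_operator_simplify} applied to the variables indexed by a block $b$ states precisely that $\mathcal{M}_b = \mathbb{E}\{\prod_{i\in b}[\phi^i-\mathbb{E}[\phi^i]]\}$. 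So the claimed identity is essentially a direct matching of factors, once one is careful about what $\mathcal{M}_b$ means for a block $b$ of arbitrary cardinality (including a two-element block, in which case $\mathcal{M}_b$ is just the cross-covariance operator of centred features).

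The key steps, in order, are as follows. First, I would state the convention that for any nonempty subset $b\subseteq D$, $\mathcal{M}_b$ denotes the Mixed Central Moment operator formed from the variables in $b$ — this is well-defined because Definition of the Mixed Central Moment operator and Proposition~\ref{prop: lan_operator_simplify} make sense verbatim for any finite index set, not just $\{1,\dots,d\}$. Second, I would invoke Proposition~\ref{prop: lan_operator_simplify} to replace each factor $\mathbb{E}\{\prod_{i\in b}[\phi^i-\mathbb{E}[\phi^i]]\}$ in the expression from Proposition~\ref{prop: streitberg_simplify} by $\mathcal{M}_b$. Third, I would observe that the combinatorial weight $(|\pi_s|-1)!(-1)^{|\pi_s|-1}$ and the range of summation (partitions of $D$ with no singletons) are identical in both expressions, so the substitution yields exactly $\mathcal{K}_d = \sum_{\pi_s} (|\pi_s|-1)!(-1)^{|\pi_s|-1}\prod_{b\in\pi_s}\mathcal{M}_b$, which is the claim. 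Optionally, for completeness, I would sketch why partitions \emph{with} singletons drop out: a singleton block $\{j\}$ contributes a factor $\mathbb{E}[\phi^j-\mathbb{E}[\phi^j]] = 0$, so every term indexed by a partition containing a singleton vanishes — this is the content already noted in the remark after Proposition~\ref{prop: streitberg_simplify}, and it is what licenses restricting to $\pi_s$ in the first place.

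The main obstacle, such as it is, is bookkeeping rather than mathematics: one must be careful that the centring-and-simplification step producing Proposition~\ref{prop: streitberg_simplify} and the definition of $\mathcal{M}_b$ are applied consistently, so that the ``no-singleton'' restriction in Proposition~\ref{prop: streitberg_simplify} matches the restriction that makes each $\mathcal{M}_b$ a genuine (nonzero) central moment over a block of size $\ge 2$. A secondary subtlety is the identification of the tensor-product structure: $\prod_{b\in\pi_s}\mathcal{M}_b$ should be read as a product of operators acting on the respective tensor factors of $\bigotimes_{i=1}^d \mathcal{H}^i$, matching the product $\prod_{b\in\pi_s}\mathbb{E}\{\prod_{i\in b}[\phi^i-\mathbb{E}[\phi^i]]\}$ in Proposition~\ref{prop: streitberg_simplify}; once this correspondence is spelled out, the identity is immediate. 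I would also cite~\cite{ip2004structural} for the underlying moment-to-cumulant relation, since the lemma is the operator-valued analogue of the classical fact that the joint cumulant equals a weighted sum over partitions of products of joint central moments of the blocks.
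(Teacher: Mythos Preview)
The paper does not supply its own proof of this lemma; it is stated with a citation to~\cite{ip2004structural} and then \emph{used} in the proof of Proposition~\ref{prop: streitberg_simplify}. Your proposal runs the implication in the opposite direction: you invoke Proposition~\ref{prop: streitberg_simplify} to obtain $\mathcal{K}_d=\sum_{\pi_s}(|\pi_s|-1)!(-1)^{|\pi_s|-1}\prod_{b\in\pi_s}\mathbb{E}\{\prod_{i\in b}[\phi^i-\mathbb{E}\phi^i]\}$ and then substitute $\mathcal{M}_b$ via Proposition~\ref{prop: lan_operator_simplify}. Relative to the paper's logical structure this is circular, since the appendix proof of Proposition~\ref{prop: streitberg_simplify} is precisely ``apply Lemma~\ref{lem: streitberg_lancaster}, then insert $\mathcal{M}_b=\mathbb{E}\{\prod_{i\in b}[\phi^i-\mathbb{E}\phi^i]\}$''.

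Your argument is easily repaired, because the paper also notes that Proposition~\ref{prop: streitberg_simplify} ``can be completed by manual expansion and then equating the terms''. So either (i) first establish Proposition~\ref{prop: streitberg_simplify} directly from the Definition of $\mathcal{K}_d$ by centring each $\phi^i$ and observing that any block of size one contributes a factor $\mathbb{E}[\phi^j-\mathbb{E}\phi^j]=0$ (this is exactly the cancellation you sketch in your ``optional'' paragraph, but it must be done \emph{before} you are entitled to cite Proposition~\ref{prop: streitberg_simplify}); or (ii) bypass Proposition~\ref{prop: streitberg_simplify} entirely and prove the lemma straight from the two Definitions, which is the classical moment--cumulant identity in~\cite{ip2004structural}. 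In either case the substitution step you describe is then correct and the identification of $\prod_{b\in\pi_s}\mathcal{M}_b$ with the tensor-factor product is exactly as you say.
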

\textit{\underline{Remark:}} When $d=2,3$, $\mathcal{K}_d$ and $\mathcal{M}_d$ are identical, which further confirms the moment-cumulant relationship.

\textit{\underline{Remark:}}  From the partition lattice,   $\{\hat{0},\hat{1}\}\subseteq{\pi_\ell}\subseteq{\pi}$, where ${\pi_\ell}$ is the set of partitions in the Lancaster lattice, and ${\pi}$ is the full set in the Streitberg interaction. When $d=2$, we have $\{\hat{0},\hat{1}\}={\pi_\ell}={\pi}$,  and when $d=3$, we have $\{\hat{0},\hat{1}\}\subset{\pi_\ell}={\pi}$. 

The estimation using a V-statistic for the norm of the Streitberg interaction is given by:
\begin{proposition}[Streitberg interaction estimator]\label{prop: streitberg_teststat}
\begin{equation}
    ||\hat{\mathcal{K}}_d||^2_{\mathcal{HS}} = \sum_{\pi_s, \pi_{s}'\in\Pi(D)}(|\pi_s|-1)!(|\pi_{s}'|-1)!(-1)^{|\pi_s|+|\pi_{s}'|} \frac{1}{n^{|\pi_s|+|\pi_{s}'|}}\sum_{\{b_i\}} \sum_{\{b_i'\}} \prod_{i=1}^d \tilde{K}^i_{b_i, b_i'}\label{eqn: test_statistic},
\end{equation}
where $1\leq b_i, b_i'\leq n$ and $|\{b_i\}|=|\pi_s|$, $|\{b_i'\}|=|\pi_s'|$.
i.e. indices $b_i = b_j$ if $i,j$ are in the same block in $\pi_s$ similarly for $b_i'$. The sets of indices $\{b_i\}, \{b_i'\}$ are disjoint. An example for $d=4$ is given in the SI.
\end{proposition}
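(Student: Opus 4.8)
The plan is to start from the simplified form of the Mixed Cumulant operator given in Proposition~\ref{prop: streitberg_simplify}, namely
$\mathcal{K}_d = \sum_{\pi_s} (|\pi_s|-1)!(-1)^{|\pi_s|-1}\prod_{b\in\pi_s}\mathbb{E}\{\prod_{i\in b}[\phi^i-\mathbb{E}\phi^i]\}$, and compute its squared Hilbert--Schmidt norm $\|\mathcal{K}_d\|^2 = \ip{\mathcal{K}_d}{\mathcal{K}_d}_{\mathrm{HS}}$. First I would expand the norm bilinearly, producing a double sum over pairs of singleton-free partitions $(\pi_s,\pi_s')$ with the prefactor $(|\pi_s|-1)!(|\pi_s'|-1)!(-1)^{|\pi_s|+|\pi_s'|-2}=(|\pi_s|-1)!(|\pi_s'|-1)!(-1)^{|\pi_s|+|\pi_s'|}$. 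The inner product of the two tensor-product terms factorises across the $d$ coordinates because the RKHS is a tensor product $\bigotimes_i \mathcal{H}^i$ and the kernels act coordinatewise: $\ip{\prod_{b\in\pi_s}\mathbb{E}[\cdot]}{\prod_{b'\in\pi_s'}\mathbb{E}[\cdot]}_{\mathrm{HS}} = \prod_{i=1}^d \mathbb{E}_{X^i,Y^i}\,\tilde k^i(X^i,Y^i)$ at the population level, where the expectation over $X^i$ is taken jointly within the block of $\pi_s$ containing $i$ and over $Y^i$ jointly within the block of $\pi_s'$ containing $i$, and $\tilde k^i$ denotes the centred kernel $k^i(x,y)-\mathbb{E}k^i(x,\cdot)-\mathbb{E}k^i(\cdot,y)+\mathbb{E}k^i(\cdot,\cdot)$.

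Next I would pass to the empirical V-statistic by replacing each expectation with an average over the $n$ samples. A block $b\in\pi_s$ of the first partition introduces a single free summation index $a_b\in\{1,\dots,n\}$ shared by all coordinates $i\in b$ (this encodes "the same draw for jointly-distributed coordinates"); likewise each block $b'\in\pi_s'$ introduces an index $a'_{b'}$. Relabelling, for each coordinate $i$ we set $b_i := a_{b}$ where $b$ is the block of $\pi_s$ containing $i$, and similarly $b_i' := a'_{b'}$; by construction $b_i=b_j$ iff $i,j$ lie in the same block of $\pi_s$, which is exactly the stated constraint, and the two index sets $\{b_i\}$ and $\{b_i'\}$ range independently over $\{1,\dots,n\}$, hence are disjoint as summation variables. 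The number of free indices is $|\pi_s|$ and $|\pi_s'|$ respectively, giving the normalisation $n^{-(|\pi_s|+|\pi_s'|)}$. The centred empirical kernel $k^i(x^i_a,\cdot)-\frac1n\sum_c k^i(x^i_a,x^i_c)-\dots$ evaluated at the sample pair $(x^i_{b_i},x^i_{b_i'})$ is precisely the $(b_i,b_i')$ entry of $\tilde K^i = HK^iH$, since double-centring the Gram matrix implements exactly that subtraction of row mean, column mean, and grand mean. Substituting $\tilde K^i_{b_i,b_i'}$ and collecting the product over $i=1,\dots,d$ yields~\eqref{eqn: test_statistic}.

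The step I expect to be the main obstacle is the careful bookkeeping of indices when translating the block structure of the partition pair into the summation pattern — in particular verifying that the coordinatewise factorisation of the inner product is compatible with the fact that a given coordinate $i$ belongs to one block of $\pi_s$ and one (possibly different) block of $\pi_s'$, so that its index pair $(b_i,b_i')$ is determined by both partitions simultaneously. A secondary subtlety is confirming that replacing the true embedding by its empirical counterpart already carries the centring through the Gram matrix: one should check that $\widehat{\mathbb{E}\{\prod_{i\in b}[\phi^i-\mathbb{E}\phi^i]\}}$ built from empirical means coincides with the tensor whose entries along coordinate $i$ are rows of $\tilde K^i$, which is the content of the $d=2,3$ cases already recorded after Proposition~\ref{prop: lan_teststats} and in the Remarks; extending it to general singleton-free $\pi_s$ is a direct induction on the block sizes. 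The $d=4$ worked example in the SI serves as the sanity check that the signs, factorials, and index constraints have been assembled correctly.
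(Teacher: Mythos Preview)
Your proposal is correct and follows essentially the same route as the paper: start from the centred form of $\mathcal{K}_d$ in Proposition~\ref{prop: streitberg_simplify}, expand $\langle\hat{\mathcal{K}}_d,\hat{\mathcal{K}}_d\rangle_{\mathrm{HS}}$ bilinearly over pairs $(\pi_s,\pi_s')$, use the tensor-product structure of the RKHS to factorise coordinatewise, and identify $\langle\tilde\phi^i(x^i_{b_i}),\tilde\phi^i(x^i_{b_i'})\rangle=\tilde K^i_{b_i,b_i'}$. The paper's proof is in fact terser than your outline---it writes the empirical operator with centred feature maps $\tilde\phi^i$ already in place and simply squares in three displayed lines---so your discussion of the index bookkeeping and the role of $HK^iH$ is more explicit than what the paper records, but there is no substantive difference in approach.
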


\paragraph{Norms and the Product Lattice.}\label{sec:product_lattice}
The test statistics of the interaction measures, i.e., the norm of the interaction operators, can also be directly obtained from the M\"obius inversion of the Cartesian product of the upper semilattice, thus avoiding the need to compute the square of the operator. The Cartesian product of two lattices with product ordering is also a lattice~\cite{gratzer2011lattice}. The elements in the Cartesian product are analogous to the inner products of kernel embeddings in the estimators. Therefore, the coefficients can be computed directly from the product lattice, as shown in Appendix~\ref{sec: prod_lattice}.

\subsection{Description of the Statistical Tests}

The Streitberg interaction test involves rejecting the null hypothesis that the joint distribution can be factorised in some way, which occurs when $\Delta_S^d \mathbb{P} \neq 0$. Similarly, for the Lancaster interaction test, we reject the null hypothesis that the joint distribution can be factorised into partitions with at least one singleton by checking $\Delta_L^d \mathbb{P} \neq 0$. Both null hypotheses consist of multiple sub-hypotheses, and an example for $d=3$ is discussed in Ref.~\cite{sejdinovic2013kernel,rubenstein2016kernel}. Importantly, the number of sub-hypotheses to be tested is related to factorisations with only two blocks, i.e., those that form the second level from the top in the partition lattice (see Figure~\ref{fig:fig1}). In the case of rejecting some but not all the sub-hypotheses, please see Appendix~\ref{sec: multiple test} for the discussion.


To test each sub-hypothesis $\mathbb{P}_{1\cdots d} = \mathbb{P}_{b}\mathbb{P}_{b'}$, for efficiency we fix the observations of the variables in the largest block, and permute the observations of the remaining variables in the other block to induce the independence structure in the null distribution. We then use the Monte-Carlo approximation to compute the p-value~\cite{pfister2018kernel}, and apply a simple correction\cite{rubenstein2016kernel} generalised to $d$ variables to correct for multiple hypothesis testing. Finally, we reject the composite null hypothesis if all corresponding sub-hypotheses are rejected. To achieve this, we employ the permutation test outlined in Algorithm~\ref{alg}.


\begin{algorithm}[hbt!]
\caption{Permutation test for the interaction measures}\label{alg}
\begin{algorithmic}
\STATE $\text{test-statistic}\gets||\hat{\mathcal{K}}^d||^2_{(\tilde{K}^1,\ldots,\tilde{K}^d)}$
\FOR {$i \in \{\pi\}$ (here $\pi= b|b' $)}
    \STATE initialise empty $P$-dimensional vector $\mathbf{T}$
    \FOR {$p = 1:P$}
        \FOR {$j = 1:d$}
            \IF{$j\in \underset{x\in\{b,b'\}}{\mathrm{argmin}}(|x|)$}           
                \STATE $\tilde{K}^{j} \gets \tilde{K}^{j}_{(s)}$ \COMMENT{kernel matrices after random permutations on the observations of $X^j$}
            \ENDIF
        \ENDFOR  
        \STATE $T[p]\gets||\hat{\mathcal{K}}^d||^2_{(\tilde{K}^1,\ldots,\tilde{K}^d)}$
    \ENDFOR    
    
    \STATE $\text{tmp} \gets \#\{p \in\{1, \ldots, P\}| \mathbf{T}[p] \geq \text{test-statistic}\}$
    \STATE $\text{pval}\gets(\text{tmp}+1)/(P+1)$\COMMENT{Monte-Carlo approximation~\cite{pfister2018kernel}}
    \IF{$\text{pval}>\alpha$} 
        \STATE $\text{reject} \gets 0$ \COMMENT{`simple correction'~\cite{rubenstein2016kernel}}
        \STATE \textbf{break}
    \ENDIF
\ENDFOR
\STATE $\text{reject} \gets 1$
\RETURN $\text{reject}$
\end{algorithmic}
\end{algorithm}


By rejecting the null hypothesis of the Streitberg interaction test, it follows that we reject the null hypothesis of the Lancaster interaction test, as well as the null hypothesis of joint independence. The set of sub-hypotheses in joint independence is a subset of the set in the Lancaster interaction test, which is in turn a subset of the set in the Streitberg interaction test. 
The number of sub-hypotheses for an interaction measure is equal to the number of elements on the second level of the respective lattice. For Streitberg interaction, this number is $(2^{d-1}-1)$ as follows from the full partition lattice; for Lancaster interaction, this number is $d$ due to the reduced Lancaster lattice; 
for joint independence the number of sub-hypotheses stays fixed as the corresponding 2-element lattice always contains precisely one element, i.e., $\hat{0}$ corresponding to $\mathbb{P}_1\cdots\mathbb{P}_d$. 

\textit{\underline{Remark:}} Although our focus is on composite interaction tests to unveil high-order interactions by leveraging the vanishing of Streitberg and/or Lancaster interactions, these measures are also valuable to test other lower-order independence hypotheses wherein factorisations can also lead to vanishing interaction measures. For example, both Lancaster and Streitberg statistics can be implemented to test for joint independence (as shown in Section~\ref{sec:experiments}), and for marginal independence.


\subsection{Generalised Interaction Measure}

The measures described so far are only able to handle interactions within one group of variables. But what if we are interested in knowing whether a factorisation (e.g., $\mathbb{P}_{12}\mathbb{P}_{34}$) factorises further? To answer this, we can simply formulate an interaction measure based on a partition
~\cite{mccullagh2018tensor}. Because an interval of a lattice $\mathcal{L}$ is a subset of the form $[\sigma, \pi]=\{x \mid \sigma \leq x \leq \pi\}$ and is also a lattice~\cite{gratzer2011lattice}, the interval lattice can be used to produce generalised interactions for multiple groups of variables:
\begin{definition}
    [General interaction operator]
    \begin{equation}
        \Delta_S^{\pi_s} \mathbb{P} = \sum_{\sigma\leq\pi_s}m(\sigma,\pi_s)\mathbb{P}_{\sigma}= \prod_{b\in\pi_s} \Delta_S^b \mathbb{P},
    \end{equation}
    where $\pi_s$ are the partitions with no singletons, $m$ are the M\"obius coefficients, and $b$ are the blocks in the partition $\pi_s$.
\end{definition}

Here, we only consider blocks of size at least 2, since a factorisation cannot not be defined for singletons. It is then clear that $\Delta_S^{\pi_s}\mathbb{P} = 0$ if any $\Delta_S^b \mathbb{P}$ is zero,
and the rejection statement on each group will be dependent on their own vanishing conditions. This provides a general framework to detect interactions within blocks of variables. The corresponding kernel-based tests can be constructed by estimating the norm of $\Delta_S^{\pi_s}\mathbb{P}$. In fact, the generalised Streitberg interaction measure can be expressed in terms of ordinary Streitberg interaction measures defined in Section~\ref{sec:streitberg_interaction}~\cite{mccullagh2018tensor}:
\begin{equation}
\Delta^{\pi_s}_S \mathbb{P}=\sum_{\sigma \vee \pi_s=\hat{1}} \prod_{b\in\sigma}\Delta_S^{b}\mathbb{P}.
\end{equation}

\section{Computational Considerations}\label{sec: computational}

The time complexity of computing the Lancaster interaction estimator is $\mathcal{O}(dn^2)$, where $d$ is the order of interaction and $n$ is the number of samples. This follows immediately after centring the kernel matrices, since $\hat{1}$ is the only partition with no singletons. 
In contrast, the Streitberg interaction has a larger number of partitions, given by the Bell number, $B_d$ \cite{bell1938iterated}. 
However, by simplifying the computation through centring, we can reduce this to the number of partitions with no singletons, given by $F_d$~\cite{bona2016real}. While this number is still combinatorial, it is significantly smaller compared to $B_d$ (see Table~\ref{tab:bell} in Appendix~\ref{sec: computational_discussion}).

The resulting estimator of the Streitberg interaction criterion then consists of $F_d$ inner products. The summation indices of the inner products come from both partitions and are at most $d$ (when $d$ is even) or $d-1$ (when $d$ is odd). Whilst the summations can be contracted, the choice of contraction ordering plays a crucial role in the time complexity~\cite{chi1997optimizing,pfeifer2014faster,schindler2020algorithms}. By computing the summations related to one partition in parallel, since the index sets from $\pi_s$ and $\pi_s'$ are disjoint, the overall complexity can be reduced to $\mathcal{O}(n^{min(|\pi_s|, |\pi_s'|) + 1})$ (see Table~\ref{tab:big_O} in Appendix~\ref{sec: computational}).
For example, the inner product between the kernel embeddings for $\mathbb{P}_{1234}$ and $\mathbb{P}_{12}\mathbb{P}_{34}$ is $\frac{1}{n^3}\sum_i\sum_j\sum_k K^1_{ij}K^2_{ij}K^3_{ik}K^4_{ik}$, but an improved contraction ordering is $\frac{1}{n^3}\sum_i[\sum_j K^1_{ij}K^2_{ij}][\sum_k K^3_{ik}K^4_{ik}]$.

The time complexity of the Streitberg interaction estimator can be further reduced in a recursive setting. Specifically, if 
$\pi_s \vee \pi_s'\prec\hat{1}$, then the associative inner product 
can be expressed as a product of independent sums that can be found in lower order interactions. For example, $\langle\mu_{\mathbb{P}_{12}\mathbb{P}_{34}}, \mu_{\mathbb{P}_{12}\mathbb{P}_{34}}\rangle$ can be decomposed as $\langle\mu_{\mathbb{P}_{12}}, \mu_{\mathbb{P}_{12}}\rangle$ $\langle\mu_{\mathbb{P}_{34}}, \mu_{\mathbb{P}_{34}}\rangle$ which would have already been computed for the pairwise interactions of $\{X^1,X^2\}$ and $\{X^3,X^4\}$. This means that if one has already computed the inner products of $d-2$ variables (incremental by 2 since the smallest block is at least size 2), 
then the only remaining inner products to compute are
between those embeddings associated with partitions $\pi_s$ and $\pi_s'$ such that $\pi_s \vee \pi_s'=\hat{1}$.
In other words, the computation of inner products for high-order interactions can be reduced to the computation of lower order interactions, as long as the corresponding partitions can be joined to form a partition of all $d$ variables. 
For further discussions on computational complexity, see Appendix~\ref{sec: computational_discussion}.


\section{Experiments}
\label{sec:experiments}

We first validate our family of $d$-order interaction tests (in particular, $d=5$) on synthetic datasets with ground truth interactions, and then investigate their application to a neuroimaging dataset. Unless stated otherwise, the significance level is set to $\alpha$ = 0.05, sample size to $n = 80$, and we use Gaussian kernels with the median heuristic as the bandwidth.

\paragraph{Synthetic Experiments: Multivariate Gaussians.}

\begin{figure}[t!]
\centering
\includegraphics[width=1\textwidth]{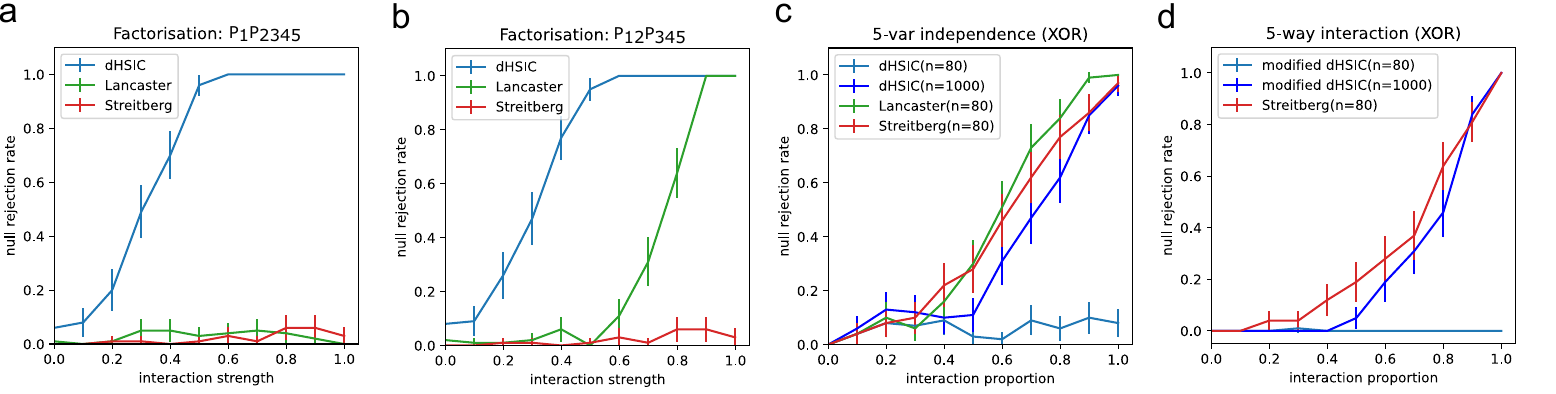}
\caption{\textbf{High-order tests on synthetic data.} (a) dHSIC is unable to detect the marginal $\mathbb{P}_1\mathbb{P}_{2345}$ factorisation. 
(b) Only the Streitberg test is able to capture the factorisation without singletons $\mathbb{P}_{12}\mathbb{P}_{345}$.
(c) Testing joint independence of a 5-way XOR example,
Lancaster and Streitberg tests achieve higher power than dHSIC for the same number of samples ($n$=80). Only by increasing the number of samples to $n$=1000 does dHSIC display comparable power.
(d) The modified dHSIC test requires substantially more samples than the Streitberg test to display comparable power when testing all sub-hypotheses of the interaction in the XOR example.  
}
\label{fig:syn_exp}
\end{figure}

We first compare results from applying the dHSIC test (developed solely to detect joint independence)~\cite{pfister2018kernel}, and the Lancaster and Streitberg interaction tests to five-variable multivariate Gaussian distributions $\mathcal{N}(\mu, \Sigma)$ with mean $\mu$ and covariance $\Sigma$. 
For the first example, $\mu=[0,0,0,0,0]$ and $\Sigma_1=[[1,0,0,0,0],[0,1, \beta,\beta,\beta],[0,\beta,1,\beta,\beta],[0,\beta,\beta,1,\beta],[0,\beta,\beta,\beta,1]]$ where $0 \leq \beta \leq 1$ defines the interaction strength between variables and consequently $\mathbb{P}_{12345}=\mathbb{P}_1\mathbb{P}_{2345}$. For the second example, we have the same $\mu$ and $\Sigma_2=[[1,\beta,0,0,0],[\beta,1,0,0,0],[0,0,1,\beta,\beta],[0,0,\beta,1,\beta],[0,0,\beta,\beta,1]]$ and hence $\mathbb{P}_{12345}=\mathbb{P}_{12}\mathbb{P}_{345}$.

Figure~\ref{fig:syn_exp}(a-b) shows that dHSIC fails to capture the partial factorisation of the joint distribution in both cases. Whilst the Lancaster interaction test is able to detect the factorisation in the first experiment (Figure~\ref{fig:syn_exp}(a)), it fails on the second experiment where the factorisation contains no singletons (Figure~\ref{fig:syn_exp}(b)).
The Streitberg interaction test is the only one that detects both factorisations.


\paragraph{Synthetic Experiments: 5-Way XOR Gate.}
Next, we investigate the power of the tests using a data set constructed using an XOR gate. 
We generate $n$ samples of $V, W, X, Y, Z\stackrel{i.i.d.}{\sim} \mathcal{U}(0,4)$, $Z_{:i} = (V_{:i}+W_{:i}+X_{:i}+Y_{:i})\mod 4$ and  $Z_{i+1:n} \sim \mathcal{U}(0,4))$ (where the samples $[i+1:n]$ act as noise). We then gradually increase the interaction proportion, $0 \leq i/n \leq 1$. By construction, this dataset does not contain pairwise, 3-way or 4-way interactions, and the 5-way interaction becomes increasingly easier to detect as $i/n$ approaches 1.


Given that the joint distribution in this example cannot be factorised, all tests should reject joint independence. We show in Figure~\ref{fig:syn_exp}(c), that the rejection rates of joint independence for Streitberg and Lancaster approach one as the interaction level increases for a small number of samples ($n=80$), whereas dHSIC displays low power for the same number of samples. Only by increasing the number of samples to $n=1000$ do we observe comparable performance for dHSIC.
To test for 5-way interactions in the XOR data set, we omit the Lancaster test which cannot detect factorisations into non-singletons. We thus compare the Streitberg interaction test with a modified dHSIC~\cite{sejdinovic2013kernel,rubenstein2016kernel} that tests each individual sub-hypothesis (e.g., when testing $(X,Y)\independent (Z,W)$ it is sufficient to treat $(X,Y)$ and $(Z,W)$ as single variables and perform joint independence tests for two variables). We find again that the modified dHSIC requires a much larger sample size to achieve comparable performance to that of the Streitberg interaction test (Fig.~\ref{fig:syn_exp}(d)).

To investigate how the test power degrades when the sample size is decreased, we performed further experiments in Fig~\ref{fig:sample_size} in Appendix~\ref{sec: interplay}. In all cases, we find that the performance of dHSIC degrades faster than that of Lancaster and Streitberg, i.e., the dHSIC null rejection rate decays substantially faster than that of Lancaster and Streitberg as the sample size decreases.
\paragraph{Neuroimaging Dataset.}
\begin{figure}[t!]
\centering
\includegraphics[width=.9\textwidth]{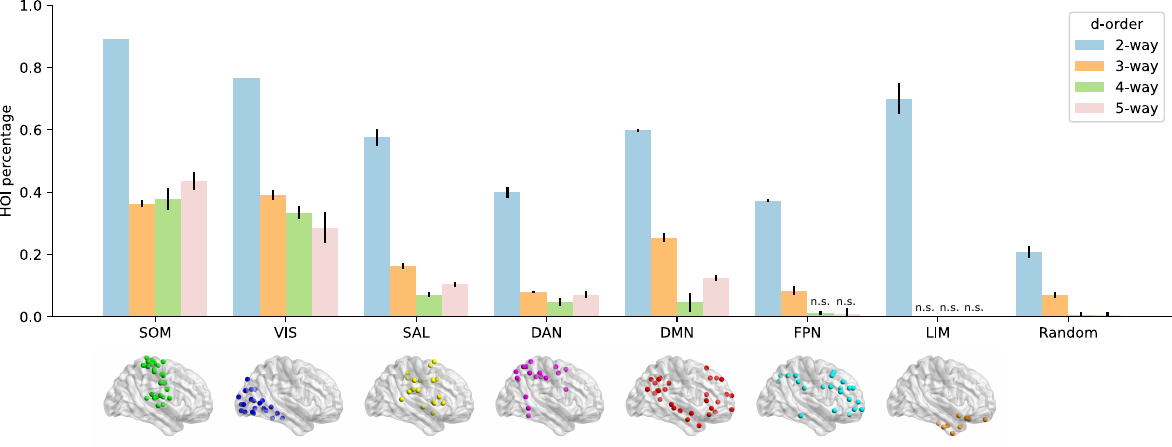}
\caption{\textbf{Concentration of high-order interactions in brain resting state networks.} 
The percentage of rejected null hypotheses for $d$-order ($d=2,3,4,5$) interaction tests, when the $d$ regions are selected within each of seven brain resting state networks (RSNs) or drawn at random across the brain. For each order $d$, we carry out Fisher-exact tests for over-representation for each RSN \textit{vs.} random. All Fisher tests are significant ($p$-value$<0.05$) except where shown as (n.s.). Abbreviations for RSNs: DMN: default mode network, SOM: somatomotor, VIS: visual, SAL: salience, DAN: dorsal attention network, FPN: frontoparietal network, LIM: limbic.}
\label{fig:exp}
\end{figure}
As a proof of concept of applications to real data, we apply the Streitberg interaction test to detect high-order interactions in brain activity data. The dataset consists of resting-state fMRI data from 50 unrelated subjects part of the Human Connectome Project~\cite{van2013wu,glasser2013minimal}. For each subject, the data consists of 100 time series capturing the activity of each of the regions in the Schaefer brain atlas~\cite{schaefer2018local}. Importantly, these 100 regions can be divided in 7 groups called `Resting State Networks' (RSN)~\cite{yeo2011organization}. Experimental evidence has shown that regions within the same RSN perform similar functions~\cite{schaefer2018local}, so we hypothesised that high-order interactions would be more probable within RSNs.

To avoid dealing with issues of non-stationarity in the time-series data, we first take temporal averages to obtain the mean activity per region and subject (see Appendix~\ref{sec: neuroimaging data} for preprocessing steps). We then perform the Streitberg interaction test to sets of brain regions that are either taken from within the same RSN or sampled at random from the whole brain. In each case, we sample 500 sets of regions or take all possible combinations, whichever is lowest. 

The results in Figure~\ref{fig:exp} align with our general hypothesis: 2-, 3-, 4- and 5-way interactions are significantly more common among regions belonging to the same RSN as compared to random sets of regions, confirming that our interaction test can successfully detect high-order interactions in real-world data. 
Notably, our results show a larger percentage of 2-way interactions in the SOM and VIS compared to regions such as the FPN or DAN, suggesting that there is increased redundancy in the system relative to these other regions. 
The SOM and VIS are structurally coupled, modular sensorimotor processing regions, that benefit from information redundancy to increase robustness.
Moreover, the proportion of high-order interactions (3-, 4- and 5-way) to 2-way interactions is larger in SOM and VIS compared to FPN or DAN, potentially reflecting the differing balance of redundancy and synergy in these brain regions~\cite{luppi2022synergistic}.
Additional analyses are shown in Appendix~\ref{sec: neuroimaging data}, but the neuroscience implications of these results, including observed variation across regions, will be the object of future work.

\section{Discussion}
In this work, we have established formal connections between lattice theory and a family of high-order interaction measures based on factorisations of the joint probability distribution. The link to lattice theory facilitates the derivation and interpretation of the measures; highlights the connection to simplicial complexes and other sublattices; and helps formulate  the associated permutation tests more efficiently.
We have shown empirically that the Lancaster test statistic is a good substitute for dHSIC to test for joint and marginal independence, whilst the Streitberg test statistic is able to better capture all factorisations of the joint distribution. 
Our work offers a rigorous and systematic approach for the reconstruction of high-order systems, but has some limitations.
The computation of the Streitberg interaction estimator can be expensive and although we have proposed strategies to bring down the time complexity, the number of terms remains combinatorial (see Appendix~\ref{sec: computational_discussion} for discussions on practical limitations).
In addition, our theoretical results rely on $iid$ data, a strong assumption in many real-world applications. This leaves several open directions. In particular, future work will investigate whether the null distribution of $\Delta_S^d P$ can be jointly approximated without the need for sub-hypotheses, thus reducing computational cost; how it can be accurately approximated when the data has temporal dependence; and what role high-order interactions may play in causal discovery. 




\section*{Acknowledgements}
MB acknowledges support by the EPSRC under grant EP/N014529/1 funding the EPSRC Centre for Mathematics of Precision Healthcare at Imperial, and by the Nuffield Foundation under the project ``The Future of Work and Well-being: The Pissarides Review".
RP acknowledges funding from the Deutsche Forschungsgemeinschaft (DFG, German Research Foundation) Project-ID 424778381-TRR 295.

For the experiments in Section~\ref{sec:experiments}, data were provided by the Human Connectome Project, WU-Minn Consortium (Principal Investigators: David Van Essen and Kamil Ugurbil; 1U54MH091657) funded by the 16 NIH Institutes and Centres that support the NIH Blueprint for Neuroscience Research; and by the McDonnell centre for Systems Neuroscience at Washington University.

The authors would like to thank Jianxiong Sun, Xing Liu and Paul Expert for valuable discussions, Asem Alaa for help in maintaining the GitHub repository, and Andrea I. Luppi for assistance with the preprocessing of neuroimaging data.

For the purpose of open access, the author has applied a ‘Creative Commons Attribution (CC BY) licence' to any Author Accepted Manuscript (AAM) version arising.

\bibliographystyle{unsrt}
\bibliography{ref}

\newpage
\appendix

\textbf{\large Appendix to ``Interaction Measures, Partition Lattices and Kernel
Tests for High-Order Interactions''}
\section{Proofs}\label{sec: proof}
\subsection{Proof of Proposition~\ref{prop: lan_vanish_con}}
First we prove the following:
\begin{lemma}\label{lem: appendix}
    If $\mathbb{P}_{1\cdots d} = \mathbb{P}_i \mathbb{P}_{1\cdots i-1, i+1\cdots d}$ for arbitrary $i$, then $\Delta_L^d \mathbb{P} = 0$
\end{lemma}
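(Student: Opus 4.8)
\textit{Proof proposal.} The plan is to exploit the multiplicative structure of the Lancaster interaction~\eqref{eqn: lancaster_interaction}. First I would expand the formal product $\Delta_L^d\mathbb{P}=\prod_{j=1}^d(\mathbb{P}_j^*-\mathbb{P}_j)$ into a signed sum over subsets $A\subseteq D$, using the defining convention $\prod_{j\in A}\mathbb{P}_j^*=\mathbb{P}_A$ (with $\mathbb{P}_\emptyset:=1$ and, in particular, $\mathbb{P}_i^*=\mathbb{P}_i$ for a single index), so that
\[
\Delta_L^d\mathbb{P}=\sum_{A\subseteq D}(-1)^{d-|A|}\,\mathbb{P}_A\!\!\prod_{j\in D\setminus A}\!\!\mathbb{P}_j .
\]
Since $\Delta_L^d\mathbb{P}$ is symmetric under relabelling of the variables, I may assume without loss of generality that the distinguished index is $i=d$, and set $D'=\{1,\ldots,d-1\}$.

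The key step is to pull the factor $(\mathbb{P}_d^*-\mathbb{P}_d)$ out of the product and expand only the remaining $d-1$ factors over subsets $A\subseteq D'$:
\[
\Delta_L^d\mathbb{P}=(\mathbb{P}_d^*-\mathbb{P}_d)\sum_{A\subseteq D'}(-1)^{d-1-|A|}\Big(\prod_{j\in A}\mathbb{P}_j^*\Big)\!\!\prod_{j\in D'\setminus A}\!\!\mathbb{P}_j
=\sum_{A\subseteq D'}(-1)^{d-1-|A|}\big(\mathbb{P}_{A\cup\{d\}}-\mathbb{P}_A\mathbb{P}_d\big)\!\!\prod_{j\in D'\setminus A}\!\!\mathbb{P}_j ,
\]
where I used $\big(\prod_{j\in A}\mathbb{P}_j^*\big)\mathbb{P}_d^*=\mathbb{P}_{A\cup\{d\}}$ and $\prod_{j\in A}\mathbb{P}_j^*=\mathbb{P}_A$. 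This reduces the claim to showing that every bracketed term $\mathbb{P}_{A\cup\{d\}}-\mathbb{P}_A\mathbb{P}_d$ vanishes.

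Finally I would invoke the hypothesis: $\mathbb{P}_{1\cdots d}=\mathbb{P}_d\,\mathbb{P}_{D'}$ says precisely that $X^d\independent(X^1,\ldots,X^{d-1})$. Marginalising out the coordinates in $D'\setminus A$ on both sides then gives $\mathbb{P}_{A\cup\{d\}}=\mathbb{P}_d\,\mathbb{P}_A$ for every $A\subseteq D'$, so each summand above is zero and hence $\Delta_L^d\mathbb{P}=0$. The only real subtlety, and the step I would be most careful about, is the bookkeeping of the formal ``$*$'' notation when splitting off the $d$-th factor, i.e.\ making sure that $\big(\prod_{j\in A}\mathbb{P}_j^*\big)\mathbb{P}_d^*$ is legitimately read as the joint $\mathbb{P}_{A\cup\{d\}}$; once that identification is pinned down, the argument is a one-line expansion followed by a single marginalisation. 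This lemma then feeds directly into Proposition~\ref{prop: lan_vanish_con} by applying it to whichever singleton block the partition $\pi_v$ contains.
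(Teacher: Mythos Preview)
Your argument is correct. Splitting the subset sum according to whether $d\in A$ pairs each term $\mathbb{P}_{B\cup\{d\}}\prod_{j\in D'\setminus B}\mathbb{P}_j$ with $\mathbb{P}_B\mathbb{P}_d\prod_{j\in D'\setminus B}\mathbb{P}_j$ (opposite signs), and the hypothesis $X^d\independent(X^1,\ldots,X^{d-1})$ forces $\mathbb{P}_{B\cup\{d\}}=\mathbb{P}_B\mathbb{P}_d$ after marginalising, so every pair cancels. The only place to keep the bookkeeping honest is the line where $(\mathbb{P}_d^*-\mathbb{P}_d)$ is ``pulled out'': the intermediate expression still contains unevaluated star-products, so strictly speaking it is a formal identity that becomes rigorous once you distribute and then apply $\prod_{j\in A}\mathbb{P}_j^*\cdot\mathbb{P}_d^*=\mathbb{P}_{A\cup\{d\}}$ and $\prod_{j\in A}\mathbb{P}_j^*=\mathbb{P}_A$, exactly as you note.

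The paper takes a different, lattice-theoretic route. Instead of factoring algebraically, it writes $\Delta_L^d\mathbb{P}$ as the M\"obius inversion over the Lancaster sublattice, observes that under the factorisation $\mathbb{P}_{1\cdots d}=\mathbb{P}_1\mathbb{P}_{2\cdots d}$ every partition $\sigma$ not refining $\pi=1|2\cdots d$ collapses to $\sigma\wedge\pi$ one level down, and then argues that the alternating M\"obius signs on consecutive levels force a complete cancellation (with the special coefficient $(-1)^{d-1}(d-1)$ at $\hat 0$ absorbing the $d-1$ partitions on the penultimate level that are not refinements of $\pi$). The two proofs implement the same pairing---your $(B\cup\{d\})$-vs-$B$ match is precisely the $\sigma\mapsto\sigma\wedge\pi$ collapse---but yours is more self-contained and elementary, needing nothing beyond the product definition of $\Delta_L^d\mathbb{P}$ and one marginalisation, whereas the paper's version keeps the lattice picture front and centre, which pays off conceptually when it is reused for the non-singleton case in Proposition~\ref{prop: lan_vanish_con}.
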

\begin{proof}
    Without loss of generality let $i=1$. To simplify the notation, we adopt a shorthand convention for expression partitions. For example, a partition $\{\{X^1, X^3\}, \{X^2\}, \{X^4\}\}$ can be written as $13\vert2\vert4$. We denote the partition $1|23\cdots d$ as $\pi$. Notice that the factorisation only consists of an extraction of a singleton, i.e. factorise one variable out of the joint distribution. The M\"obius coefficients~\cite{ip2003some} for the partial partition lattice above are, 
    \begin{equation*}
      \mu(\sigma, \hat{1})=  
      \begin{cases}(-1)^{d-1}(d-1) & \text { if } \sigma=\hat{0} \\ (-1)^{|\sigma|-1} & \text { otherwise }
        \end{cases}.
    \end{equation*} 
    The partial partition lattice is constructed as
    \begin{gather*}
        123\cdots d\\
        1|23\cdots d\qquad 2|13\cdots d\qquad 3|12\cdots d\qquad ...\\
        1|2|3\cdots d\qquad 1|3|2\cdots d\qquad \ldots\qquad 2|3|1\cdots d\qquad...\\
        \vdots\\
        1|2|3|\cdots|d
    \end{gather*}
    Given $\mathbb{P}_{1\cdots d} = \mathbb{P}_1 \mathbb{P}_{2\cdots d}$, the partial partition lattice associated with Lancaster interaction collapses into the following:
    \begin{gather*}
        1|23\cdots d\\
        1|23\cdots d\qquad 2|1|3\cdots d\qquad 3|1|2\cdots d\qquad ...\\
        1|2|3\cdots d\qquad 1|3|2\cdots d\qquad \ldots\qquad 2|3|1|\cdots d\qquad...\\
        \vdots\\
        1|2|3|\cdots|d
    \end{gather*}
    If a partition $\sigma$ is not a refinement of $\pi$, then it will be transformed to $\sigma \wedge \pi$, a partition one level down the lattice (as shown in the lattice above), e.g., let $\sigma$ be $2|13\cdots d$, then $\sigma \wedge \pi$ is be $2|1|3\cdots d$. When $\sigma$ is a refinement of $\pi$, then $\sigma \wedge \pi=\sigma$.
    
    The transformed partitions will be cancelled out with the refinements of $\pi$ due to the fact that all partitions are counted exactly once and the partitions at two consecutive levels have alternating signs. On the ($d-1$) level of the lattice, exactly $(n-1)$ partitions are not refinement of $\pi$ which are just enough to cancel out $(n-1)$ number of $\hat{0}$ at the bottom level.
\end{proof}
When $\mathbb{P}_{1\cdots d} = \mathbb{P}_{\pi_s}$ where $\pi_s$ only contains non-singleton blocks, the Lancaster interaction
\begin{equation*}
    \Delta_L^d \mathbb{P} = \prod_{b\in\pi_s} \Delta_L^b \mathbb{P},
\end{equation*}
which is not necessarily zero unlike the Streitberg interaction. It will be zero, however, if any of the $\Delta_L^b \mathbb{P}$ is zero. If we factorise one singleton out of an arbitrary block $b$, $\pi_s$ will be a partition with one singleton block and hence $\Delta_L^b \mathbb{P} = 0$ by Lemma~\ref{lem: appendix} above and so does $\Delta_L^d P$. Obviously if more singletons are factorised out, $\Delta_L^d P$ will remain zero. 
\subsection{Proof of Proposition~\ref{prop: lan_operator_simplify}}
Notice that this simplified Mixed Central Moment operator expression resembles the original Lancaster interaction in Equation~\eqref{eqn: lancaster_interaction} using the unique `$*$ notation'. Notice that the resulting terms after expanding Equation~\eqref{eqn: lancaster_interaction} are distinctive with coefficient $(-1)^{|\pi_l|-1}$ depending on the  except we have multiple occurrences of $\hat{0}$. This is due to the fact that $\mathbb{P}_i^*$ is the same as $\mathbb{P}_i^*$. For example, when $d=4$, instead of direct product of the marginals $\mathbb{P}_1\mathbb{P}_2\mathbb{P}_3\mathbb{P}_4$ that accounts for $\hat{0}$, there are also four other terms $\mathbb{P}_1^*\mathbb{P}_2\mathbb{P}_3\mathbb{P}_4$, $\mathbb{P}_1\mathbb{P}_2^*\mathbb{P}_3\mathbb{P}_4$, $\mathbb{P}_1\mathbb{P}_2\mathbb{P}_3^*\mathbb{P}_4$, $\mathbb{P}_1\mathbb{P}_2\mathbb{P}_3\mathbb{P}_4^*$ that are also equal to $\hat{0}$. Note that these terms have different signs comparing with $\mathbb{P}_1\mathbb{P}_2\mathbb{P}_3\mathbb{P}_4$ because of $\mathbb{P}_i^*$ and therefore result in $(-1)^{d-1}(d-1)$ in the M\"obius function.


\subsection{Proof of Proposition~\ref{prop: lan_teststats}}

Computed immediately from the estimated norm of Mixed Central Moment operator, the Lancaster interaction estimator is
\begin{align*}
    ||\hat{\mathcal{M}}_d||^2_{HS} &= \left\langle\frac{1}{n}\sum_{a=1}^n\prod_{i=1}^d\tilde{\phi}^i(x^i_a),\,  \frac{1}{n}\sum_{b=1}^n\prod_{i=1}^d\tilde{\phi}^i(x^i_b)\right\rangle\\
    &=\frac{1}{n^2}\sum_{a=1}\sum_{b=1}\prod_{i=1}^d\left\langle \tilde{\phi}^i(x^i_a),\, \tilde{\phi}^i(x^i_b)\right\rangle\\    
    &=\frac{1}{n^2} \sum_{a=1}^n \sum_{b=1}^n \prod_{i=1}^d \tilde{K}^i_{ab},
\end{align*}
where $\tilde{\phi}(\cdot)={\phi}(\cdot)-\mathbb{E}[\phi(\cdot)]$.
\subsection{Proof of Proposition~\ref{prop: streitberg_simplify}}

This can be completed by manual expansion and then equating the terms. Alternatively we can prove it more easily by the relationship of Lancaster interaction and Streitberg interaction in Lemma~\ref{lem: streitberg_lancaster}. 
\begin{align*}
    \mathcal{M}_b &= \mathbb{E}\left\{\prod_{i\in b}\left[\phi^i-\mathbb{E}[\phi^i]\right]\right\}\\
    \mathcal{K}_d &= \sum_{\pi_s\in\Pi(D)} (|\pi_s|-1)!(-1)^{|\pi_s|-1} \prod_{b\in \pi_s} \mathcal{M}_b\\
    &=\sum_{\pi_s\in\Pi(D)} (|\pi_s|-1)!(-1)^{|\pi_s|-1} \prod_{b\in \pi_s} \mathbb{E}\left\{\prod_{i\in b}\left[\phi^i-\mathbb{E}[\phi^i]\right]\right\}.
\end{align*}


\subsection{Proof of Proposition~\ref{prop: streitberg_teststat}}
Similar to the norm of the Mixed Central Moment operator, the norm of Mixed Cumulant operator can be obtained by simply squaring. Note that this can be computed directly using the product lattice discussed in Section~\ref{sec:product_lattice}.
\begin{align*}
    &||\hat{\mathcal{K}}_d||^2_{{HS}} \\
    =&\left\langle \sum_{\pi_s} (|\pi_s|-1)!(-1)^{|\pi_s|-1} \frac{1}{n^{|\pi_s|}}\sum_{\{b_i\}}\prod_{i=1}^d \tilde{\phi}^i(x^i_{b_i}),\, \sum_{\pi_s'} (|\pi_s'|-1)!(-1)^{|\pi_s'|-1} \frac{1}{n^{|\pi_s'|}}\sum_{\{b_i'\}}\prod_{i=1}^d\tilde{\phi}^i(x^i_{b_i'})\right\rangle\\
    =&\sum_{\pi_s, \pi_s'}(|\pi_s|-1)!(|\pi_s'|-1)! (-1)^{(|\pi_s|-1)+(|\pi_s|-1)}\frac{1}{n^{|\pi_s|+|\pi_s'|}}\left\langle \sum_{\{b_i\}}\prod_{i=1}^d\tilde{\phi}^i(x^i_{b_i}), \sum_{\{b_i'\}}\prod_{i=1}^d\tilde{\phi}^i(x^i_{b_i'})\right\rangle\\
    =&\sum_{\pi_s, \pi_s'}(|\pi_s|-1)!(|\pi_s'|-1)! (-1)^{|\pi_s|+|\pi_s|}\frac{1}{n^{|\pi_s|+|\pi_s'|}}\sum_{\{b_i\}}\sum_{\{b_i'\}}\prod_{i=1}^d \tilde{K}^i_{b_i, b_i'}
\end{align*}


\section{Product Lattice}\label{sec: prod_lattice}
The partition lattice is reduced to an upper semilattice after centring. When $d=4$, the reduced lattice only contains $\mathbb{P}_{1234}$, $\mathbb{P}_{12}\mathbb{P}_{34}$, $\mathbb{P}_{13}\mathbb{P}_{24}$ and $\mathbb{P}_{14}\mathbb{P}_{23}$ which are the 4 partitions with no singletons. Given the two arbitrary lattices $\mathcal{L}$ and $\mathcal{L}'$, the product lattice formulated using the Cartesian product of $S\times S'$ with product order can be defined. Given two elements $(\sigma, \sigma'),(\pi, \pi')\in\mathcal{L}\times \mathcal{L'}$, $(\sigma, \sigma')\leq(\pi, \pi')$ if and only if $\sigma\leq\pi$ and $\sigma'\leq\pi'$. In our case specifically, $\mathcal{L}$ and $\mathcal{L}'$ are the  identical reduced partition lattices of $d$ variables after centring, and the elements in the product lattice are analogous to the inner products when computing the kernel-based estimator. Below we show the product lattice at $d=4$. Therefore, it can be utilised to directly compute the coefficients of the estimator in Proposition~\ref{prop: streitberg_teststat}. Note that the order within each element is not particularly informative in this case as the kernel matrices are symmetric. 
\begin{figure}[H]
\centering
\includegraphics[width=1\textwidth]{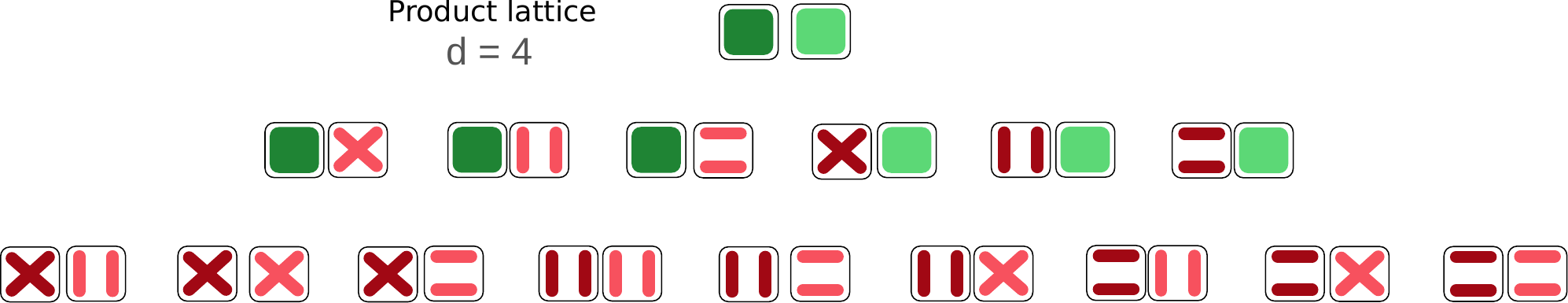}
\caption{\textbf{Product lattice for $d=4$.} The two lines and squares represent the non-singleton factorisations with two blocks and one block. Dark green and dark red highlight the elements in $\mathcal{L}$, whilst light green and light red are used for the elements in $\mathcal{L'}$.}
\label{fig:product_lattice}
\end{figure}
The resulting estimator using centred kernels when $d=4$ is 
\begin{align*}
    ||\hat{\mathcal{K}}_4||^2_{\mathcal{HS}} =& \frac{1}{n^2}\sum_{ij}\Tilde{K}^1_{ij}\Tilde{K}^2_{ij}\Tilde{K}^3_{ij}\Tilde{K}^4_{ij}\\
    &-\frac{2}{n^3}\sum_{ijk}\Tilde{K}^1_{ij}\Tilde{K}^2_{ij}\Tilde{K}^3_{ik}\Tilde{K}^4_{ik}-\frac{2}{n^3}\sum_{ijk}\Tilde{K}^1_{ij}\Tilde{K}^2_{ik}\Tilde{K}^3_{ij}\Tilde{K}^4_{ik}-\frac{2}{n^3}\sum_{ijk}\Tilde{K}^1_{ij}\Tilde{K}^2_{ik}\Tilde{K}^3_{ik}\Tilde{K}^4_{ij}\\
    &+\frac{1}{n^4}\sum_{ijkl}\Tilde{K}^1_{ij}\Tilde{K}^2_{ij}\Tilde{K}^3_{kl}\Tilde{K}^4_{kl}
    +\frac{1}{n^4}\sum_{ijkl}\Tilde{K}^1_{ij}\Tilde{K}^2_{kl}\Tilde{K}^3_{ij}\Tilde{K}^4_{kl}
    +\frac{1}{n^4}\sum_{ijkl}\Tilde{K}^1_{ij}\Tilde{K}^2_{kl}\Tilde{K}^3_{kl}\Tilde{K}^4_{ij}\\
    &-\frac{2}{n^4}\sum_{ijkl}\Tilde{K}^1_{ij}\Tilde{K}^2_{il}\Tilde{K}^3_{kj}\Tilde{K}^4_{kl}
    -\frac{2}{n^4}\sum_{ijkl}\Tilde{K}^1_{ij}\Tilde{K}^2_{il}\Tilde{K}^3_{kl}\Tilde{K}^4_{kj}
    -\frac{2}{n^4}\sum_{ijkl}\Tilde{K}^1_{ij}\Tilde{K}^2_{kl}\Tilde{K}^3_{il}\Tilde{K}^4_{kj}
\end{align*}
Notice that a few terms are summed up twice. This is due to the symmetry in the product lattice we described above.

\section{Composite Null Hypothesis}\label{sec: multiple test}
The permutation test strategy we propose for Streitberg interaction and Lancaster interaction is carried out via multiple sub-hypotheses, and the rejection of the interaction is confirmed once all the sub-hypotheses are rejected. Instead of having to consider all possible factorisations ($B_d$) for the sub-hypotheses, by considering the lattice structure we only need to perform $(2^{d-1}-1)$ sub-tests of factorisations corresponding to the 2nd level of the partition lattice, since all other sub-tests are consequences of these. 

In the case of rejecting the Streitberg interaction, it won't necessarily be the case that all sub-hypotheses are rejected, allowing us to derive a more detailed understanding of how the joint distribution is factorised. For example, in the 7 sub-hypotheses for the Streitberg interaction when $d=4$,
there is a clear difference between the number of rejections when the ground truth factorisations are $\mathbb{P}_{1}\mathbb{P}_{234}$ or $\mathbb{P}_{1}\mathbb{P}_{2}\mathbb{P}_{34}$. The former results in 6 rejections while the latter results in only 4 rejections because $\mathbb{P}_{1}\mathbb{P}_{2}\mathbb{P}_{34}$ is the mutual refinement of $\mathbb{P}_{12}\mathbb{P}_{34}$, $\mathbb{P}_{1}\mathbb{P}_{234}$ and $\mathbb{P}_{2}\mathbb{P}_{134}$. Hence, even though the vanishing conditions of Streitberg interaction are not `if and only if', we can still narrow down the range of the possible ground truth factorisations using the rejected sub-hypotheses. 

The operations can be easily explained using the lattice. Before performing any sub-test, $\Pi(D)$ is the space of all possible ground truth factorisation. Whenever we reject a sub-hypothesis, the refinements of the associated partition $\pi$ gets eliminated. For example, if we reject $\mathbb{P}_{1}\mathbb{P}_{234}$, then we also automatically reject $\mathbb{P}_{1}\mathbb{P}_{2}\mathbb{P}_{34}$, $\mathbb{P}_{1}\mathbb{P}_{3}\mathbb{P}_{24}$, $\mathbb{P}_{1}\mathbb{P}_{4}\mathbb{P}_{23}$ and $\mathbb{P}_{1}\mathbb{P}_{2}\mathbb{P}_{3}\mathbb{P}_{4}$. In other words, partitions in the interval lattice $[\hat{0}, \pi]$ are eliminated. The remaining partitions until there are no further rejections are the only possible choices and also form a lattice. Again we illustrate this idea using the partition lattice of 4 variables (see Figure~\ref{fig:prune}).
\begin{figure}[h!]
    \centering
    \includegraphics[width=0.8\textwidth]{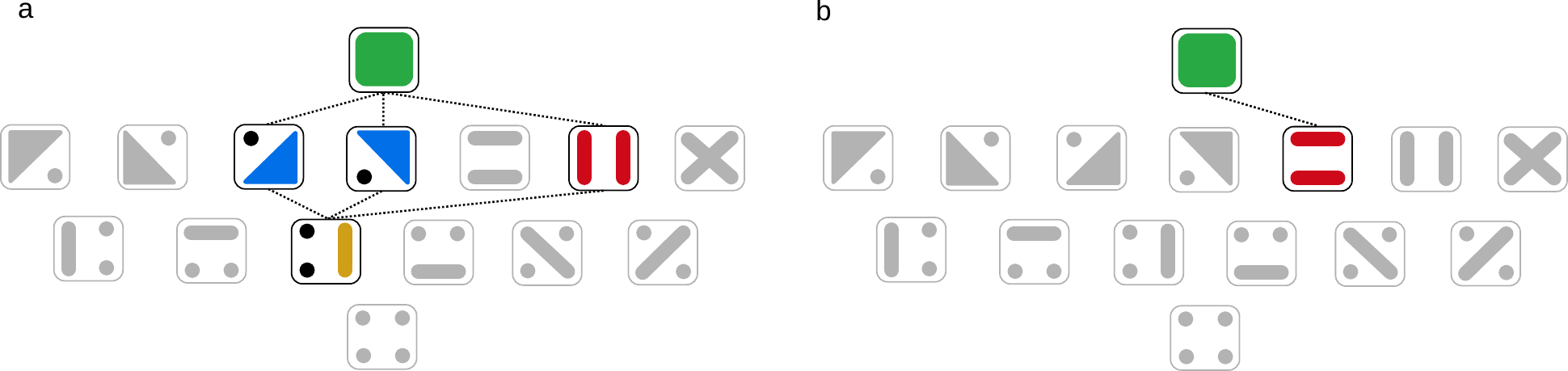}
    \caption{\textbf{Remaining factorisations with partial rejection of sub-hypotheses for $d=4$.} The greyed out partitions represent the eliminated factorisations due to the rejections in the sub-tests. A dashed line between two partitions indicates the refinement ordering (only shown for those not rejected). (a) In the case that 4 of the sub-hypotheses are rejected, all of their refinements are automatically eliminated. The remaining elements form an interval lattice and is isomorphic to the partition lattice of 3 variables. (b) If 6 of the sub-hypotheses can be rejected, the ground truth can only be either the factorisation remaining or non-factorisable. Similarly this is isomorphic to the partition lattice of two variables. }
    \label{fig:prune}
\end{figure}

\textit{\underline{Remark:}} The simplification of the operators (computation of the test statistics) can be reflected in a collapsed partition lattice without singletons. However, one should not confuse this with the full lattice that is associated with the test procedures, i.e. one should always consider the full partition lattice to figure out the possible configurations of the ground truth factorisation based on what we outlined above. The two lattices have the same form however they have completely different implications. The lattice corresponding to test procedures can only be reduced as a result of the rejections. For example, 7 sub-tests are needed instead of just 3 tests that involve the factorisations without singletons when $d=4$.

\section{M\"obius Inversion}\label{sec: mobius}
Instead of deriving the M\"obius function as the inverse of the Zeta function, it can also be computed using the expression below~\cite{speed1983cumulants},
\begin{equation*}
\mu(\sigma, \pi)= \begin{cases}1 & \text { if } \quad \sigma=\pi \\ -\sum_{\sigma \leq \rho<\pi} \mu(\sigma, \sigma) & \text { if } \quad \sigma<\pi \\ 0 & \text { otherwise }\end{cases}.
\end{equation*}

We can check the validity of this formula by computing the M\"obius on the trivial partition lattice associated with joint independence, which contains two elements, $\hat{0}$ and $\hat{1}$,
\begin{align*}
    \mu(\hat{1}, \hat{1}) &= 1\\
    \mu(\hat{0}, \hat{1}) &= -\mu(\hat{1}, \hat{1}) = -1,
\end{align*}
hence
\begin{align*}
    \Delta_I^d \mathbb{P} &= \mu(\hat{1}, \hat{1})\mathbb{P}_{1\cdots d}+  \mu(\hat{0}, \hat{1}) \prod_{i=1}^d \mathbb{P}_{i} \\
    &= \mathbb{P}_{1\cdots d}-\prod_{i=1}^d \mathbb{P}_{i}.
\end{align*}
\section{Links to Simplicial Complexes}\label{sec: simplicial complex}
A $k$-simplex $\theta$ is a set of $k$+1 vertices $\theta = [p_0, ..., p_k]$, i.e., a 1-simplex is a line, a 2-simplex is a triangle, 3-simplex is a tetrahedron, etc. A simplicial complex $\Theta$ is a collection of simplices that satisfy two conditions: 
(i) if $\theta \in \Theta$, then all the sub-simplices $v\subset \theta$ built from subsets of $\theta$ are also contained in $\Theta$; and (ii) the non-empty intersection of two simplices $\theta _{1},\theta _{2} \in \Theta$ is a sub-simplex of both $\theta _{1}$ and $\theta _{2}$. 
The first condition makes the inclusion ordering in the subset lattice natural for simplicial complexes.

Below we have the Zeta matrix in Figure~\ref{fig:incidence} and M\"obius matrix in Figure~\ref{fig:mobius} to illustrate that the boundary operators (with no directionality) of ($d$-1)-simplex can be founded as block matrices within them. The subset lattice 
doesn't include the non-simplicial terms and we have shown numerically that these terms are essentially in the synthetic experiment when $\mathbb{P}_{1234} = \mathbb{P}_{12}\mathbb{P}_{34}$.
\begin{figure}[h!]
    \centering
    \includegraphics[width=0.8\textwidth]{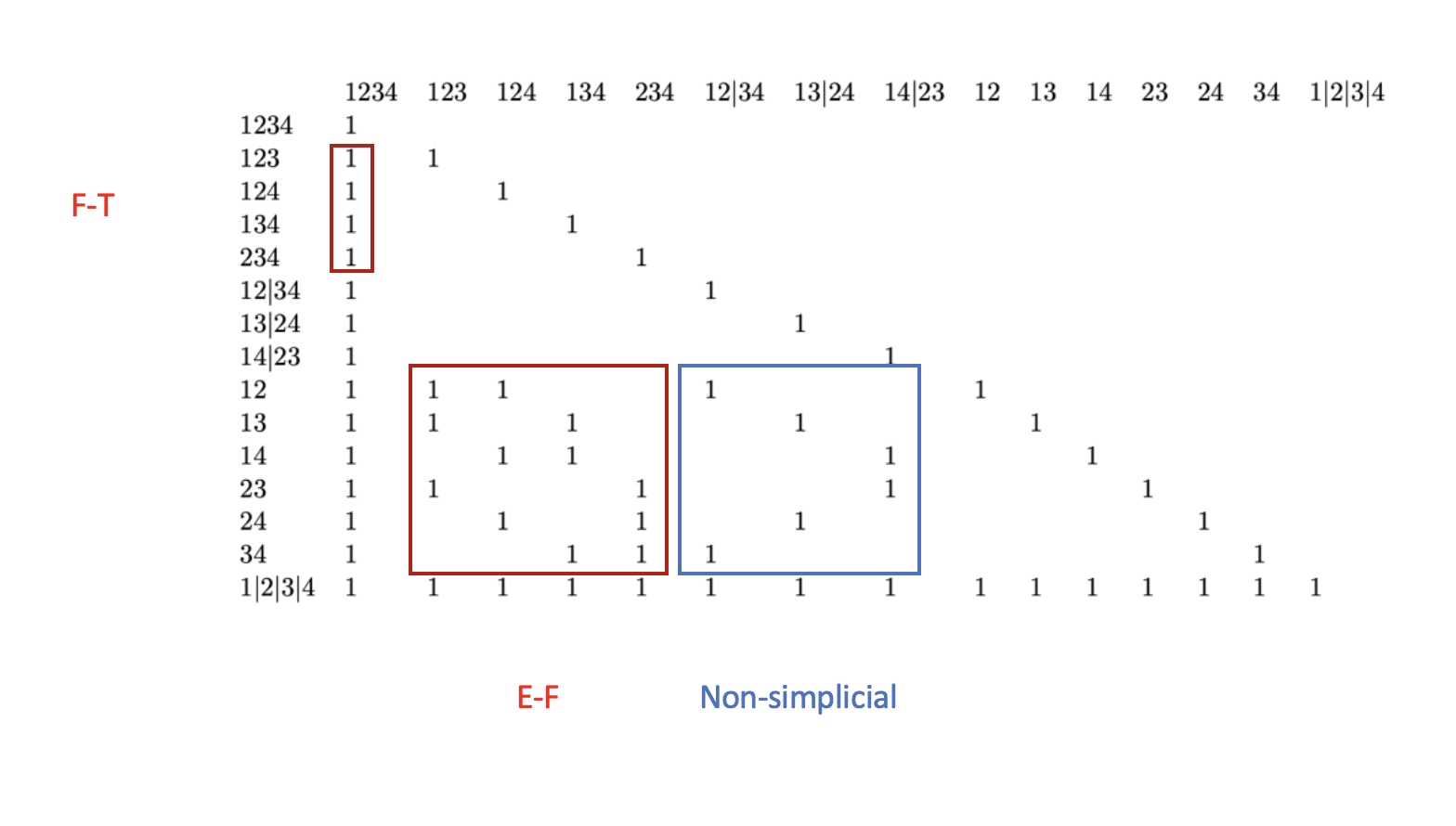}
    \caption{\textbf{Zeta matrix for the partition lattice of 4 variables.} Singletons in the partial factorisations are omitted for simplicity. The matrix can be decomposed into small blocks which correspond to the boundary operators in 3-simplex. Here the two red blocks represent the \textbf{E}dge-to-\textbf{F}ace operator and \textbf{F}ace-to-\textbf{T}etrahedron operator. The blue block corresponds to the non-simplicials, partitions with no singletons. }
    \label{fig:incidence}
\end{figure}
\begin{figure}[h!]
    \centering
    \includegraphics[width=0.8\textwidth]{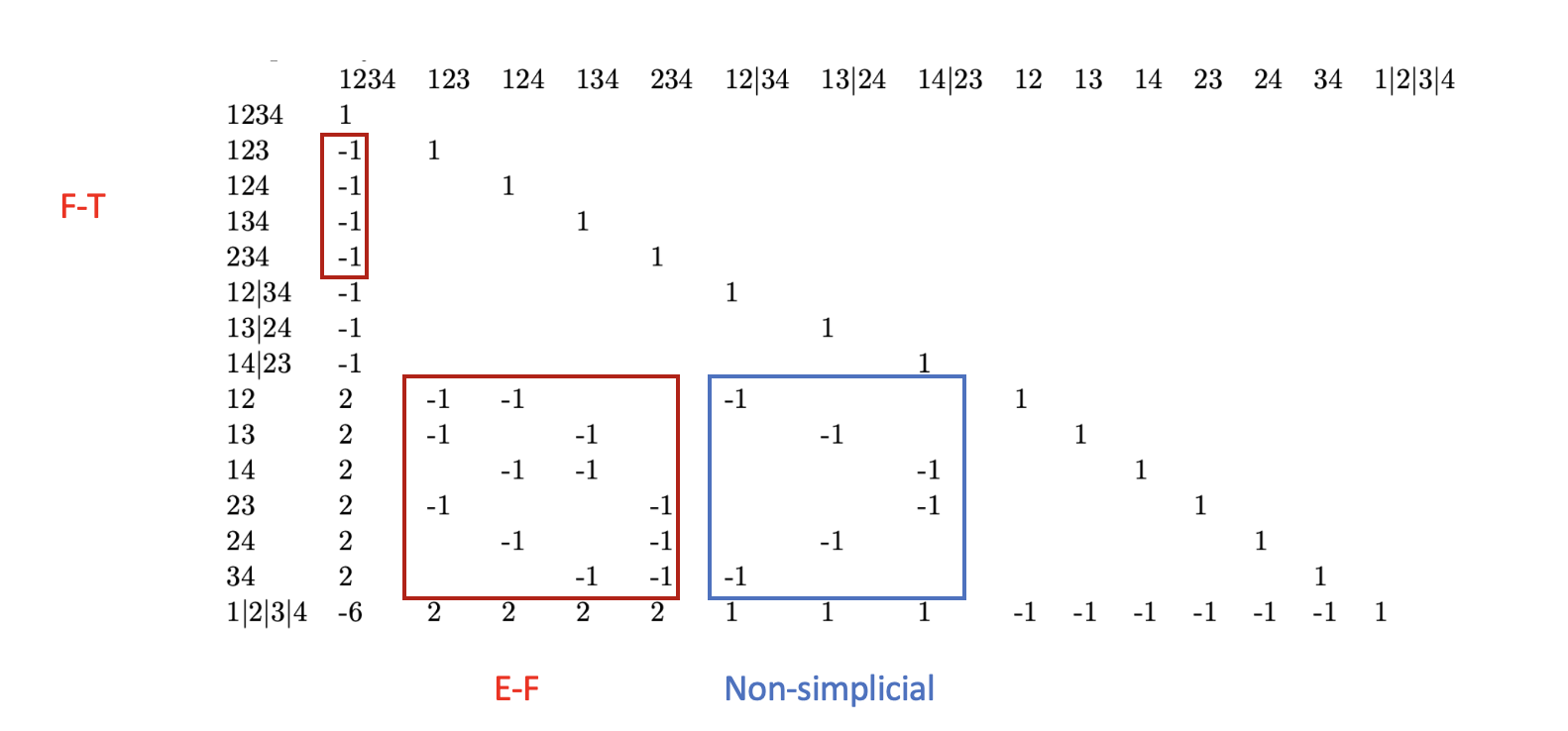}
    \caption{\textbf{M\"obius matrix for the partition lattice of 4 variables.} Comparing with the Zeta matrix above we see that the block structure is preserved. Each block can be obtained using the M\"obius inversion on the incidence matrix of its own partially ordered set, e.g. $\{12|34, 13|24, 14|23, 12|3|4, 13|2|4, 14|2|3, 23|1|4, 24|1|3, 34|1|2\}$ forms a partially ordered set. The values are always $-1$ since the partially order set only contains the elements from two neighbouring levels.}
    \label{fig:mobius}
\end{figure}

\section{Interplay of Test Power and Sample Size}\label{sec: interplay}
To explore the effect of the different sample size on test power, we have performed four experiments, each with three interaction strengths, to show how a decrease in the sample size affects the null rejection rate (Fig~\ref{fig:sample_size}). In all cases, the Lancaster and Streitberg tests are able to accurately detect higher order interactions with as low as 50-100 samples (depending on the interaction proportion/strength), whilst dHSIC requires substantially more samples. This provides further experimental evidence that the proposed Streitberg interaction test has superior sensitivity, as compared to dHSIC, in detecting higher order interactions. 

\begin{figure}[h!]
    \centering
    \includegraphics[width=0.85\textwidth]{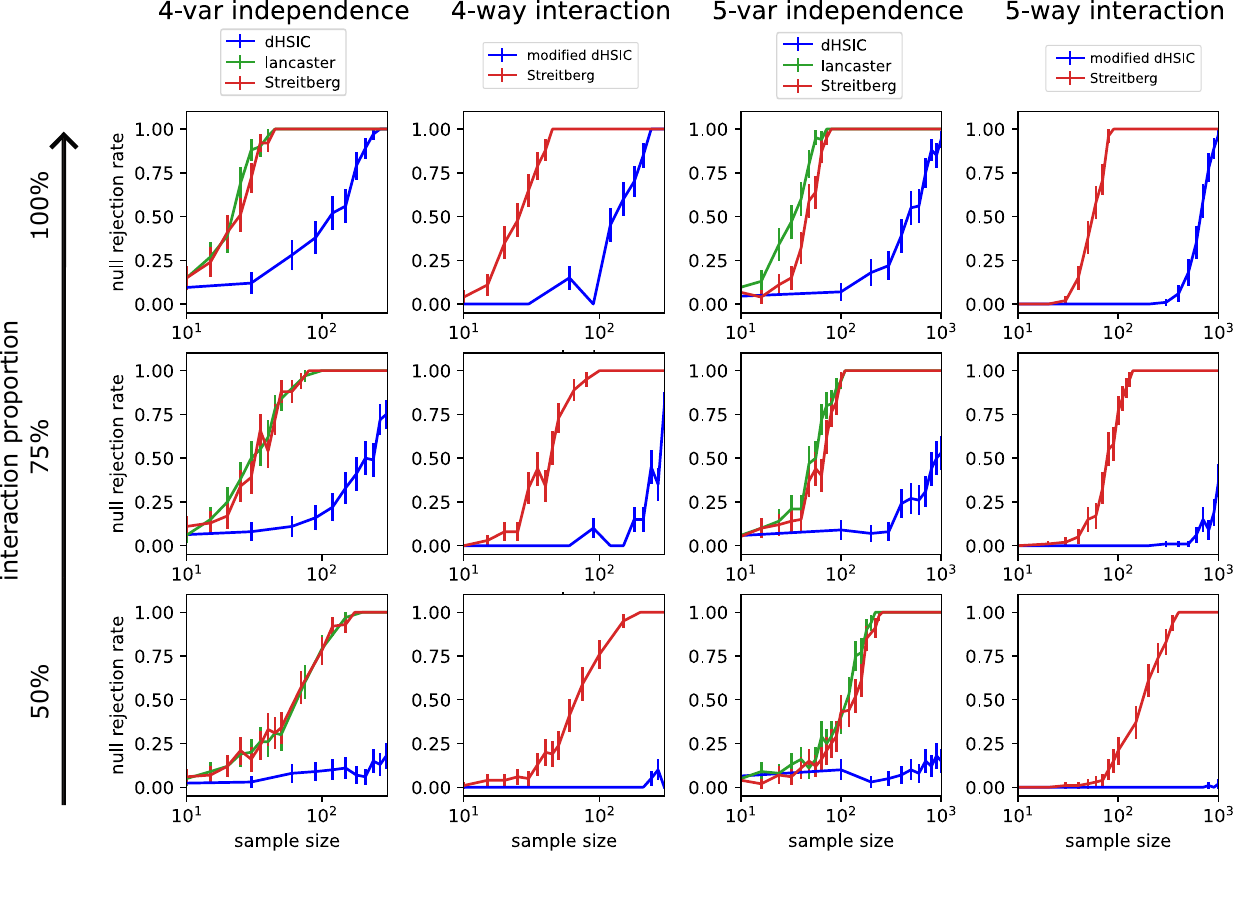}
    \caption{\textbf{Decay of accuracy rates as the sample size is decreased in the XOR example.} Independence tests (first and third columns);  interaction tests (second and fourth columns). In all cases, dHSIC degrades substantially faster than Lancaster and Streitberg.}
    \label{fig:sample_size}
\end{figure}

\section{Computational Complexity and Practical Limitations}\label{sec: computational_discussion}
\subsection{Computational Complexity}
The original time complexity for computing the Streitberg interaction estimator is $\mathcal{O}(B_d^2 n^{2d})$ where $B_d$ is the Bell number, which represents the number of partitions in a set of cardinality $d$. Hence $B_d^2$ is the number of terms in the Streitberg interaction estimator. For fixed number of samples $n$, our lattice formulation allows us to reduce the number of terms in the mixed cumulant operator from $B_d$ to $F_d$. Table~\ref{tab:bell} illustrates this reduction in the number of terms before and after eliminating the partitions with singletons. After this reduction, the complexity becomes $\mathcal{O}(F_d^2 n^{2d})$. 

\begin{table}[htb!]
    \centering
    \caption{Number of terms in the test statistics before/after eliminating the partitions with singletons.}
    \begin{tabular}{|l|l|l|l|l|l|l|l|}
    \hline
     No.\ of variables                        & 4                  & 5                   & 6                    & 7                     & 8                                            \\ \hline
    Streitberg$\Rightarrow$(centred) & 15$\Rightarrow$(4) & 52$\Rightarrow$(11) & 203$\Rightarrow$(41) & 877$\Rightarrow$(162) & 4140$\Rightarrow$(715)  \\ \hline
     
    Lancaster$\Rightarrow$(centred)                                              & 12$\Rightarrow$(1) & 27$\Rightarrow$(1)  & 58$\Rightarrow$(1)   & 121$\Rightarrow$(1)   & 248$\Rightarrow$(1)    \\ \hline
    \end{tabular}
    \label{tab:bell}
\end{table}

For fixed $d$, we can further optimise the time complexity by optimal contraction ordering, using the fact that two index sets are disjoint, such that $\mathcal{O}(n^{2d})$ becomes $\mathcal{O}(n^{min(|\pi_s|, |\pi_s'|) + 1})$. This reduction for different $d$ can be found in Table~\ref{tab:big_O}. 
\begin{table}[htb!]
    \centering
    \caption{Time complexity for $d$-order interaction estimators.}
    \begin{tabular}{|l|l|l|l|}
    \hline
      $d$-order    & dHSIC              & Lancaster$\Rightarrow$(optimised)                      & Streitberg$\Rightarrow$(optimised)                     \\ \hline
    2-way & $\mathcal{O}(n^2)$ & $\mathcal{O}(n^2)$                    & $\mathcal{O}(n^2)$                    \\ \hline
    3-way & $\mathcal{O}(n^2)$ & $\mathcal{O}(n^2)$                    & $\mathcal{O}(n^2)$                    \\ \hline
    4-way & $\mathcal{O}(n^2)$ & $\mathcal{O}(n^8)$$\Rightarrow${$\mathcal{O}(n^2)$} & $\mathcal{O}(n^8)$$\Rightarrow${$\mathcal{O}(n^3)$} \\ \hline
    5-way & $\mathcal{O}(n^2)$ & $\mathcal{O}(n^{10})$$\Rightarrow${$\mathcal{O}(n^2)$} & $\mathcal{O}(n^{10})$$\Rightarrow${$\mathcal{O}(n^3)$} \\ \hline
    \end{tabular}
    \label{tab:big_O}
\end{table}  

Similarly, the time complexity for computing the $d$-order Lancaster interaction estimator na\"ively is $\mathcal{O}(2^{d+1} n^{2d})$. By centring, i.e. eliminating the partitions with singletons, the only element left is $\hat{1}$. Therefore the Lancaster interaction estimator can be computed in $\mathcal{O}(dn^2)$. 

\subsection{Practical Considerations}
The problem of detecting $d$-order interactions among a group of $e$ variables is combinatorial, as it entails checking groups of $d$ variables chosen from the $e$ variables. Clearly, such combinatorial problems become infeasible as $e$ and/or $d$ become large. 
Recent work on real data from various application areas has focused on revealing any interactions beyond pairwise, i.e., $d>2$. It has been shown that interactions with $d=3,4$ already make a significant difference to the analysis of network structure and network dynamics~\cite{battiston2020networks, santoro2023higher,varley2023partial}, highlighting the potential benefits to be gained from tests that detect high order interactions.

Many well-recognised and widely used theoretical approaches only have closed forms for $d=3$~\cite{williams2010nonnegative, ince2017partial}, and cannot be generalised to arbitrary $d$, since the number of terms in those approaches is related to the Dedekind number which becomes rapidly intractable~\cite{van2023computation}. In contrast, here we show that the Streitberg interaction can be explicitly defined for any $d$, and we have devised theoretical and computational strategies to reduce its computational cost via the lattice theory formulation.  

In the case where high order interactions for a range of $d$ are of interest, it is possible to discount the computation of certain high order interactions when some lower-order interactions are present.
If all the lower order interactions are absent, then testing the $d$-order interaction is the same as testing the joint independence for $d$ variables. For example when $d=3$ and all pairwise interactions are absent, then testing the Streitberg interaction reduces to testing joint independence of 3 variables. 
More generally, if we test the interactions bottom-up, i.e., recursively from the lower orders upwards, the expression for Streitberg becomes simpler whenever there is a lower-order independence. The simplification is possible because the Streitberg interaction can be rewritten as the sum of the differences between a factorisation and the product of the marginals due to the fact that the sum of M\"obius coefficients is zero.
Hence the presence of a lower order independence allows us to simplify the $d$-order interaction formula.

Alternatively, there are also cost-reducing simplifications if we do the tests top-down (i.e. starting from order $d$ downwards), although the problem still remains combinatorial. If there are partial rejections for some tests involved in the $d$-order Streitberg test, we can narrow down the possible choices of the factorisation (as discussed in Appendix~\ref{sec: multiple test}). This allows us to eliminate the lower order factorisations that fall in the lattice branches of the rejected second level factorisations, thus reducing the total tests needed for the factorisation of the $d$-variables. Further simplifications are achieved by accounting for overlaps of the lattice branches.   

All these observations can be taken into account in the construction of hypergraph representations, as further discussed below in Appendix~\ref{sec: neuroimaging data}.

\section{Neuroimaging Data}\label{sec: neuroimaging data}
\subsection{Preprocessing}
Preprocessing of neuroimaging data was performed following Luppi \textit{et al.}~\citesupp{luppi2022synergistic}. We summarise the key steps here, and refer interested readers to the original paper for further details.

We started from the `minimally preprocessed' release of the fMRI data from the Human Connectome Project~\cite{van2013wu,glasser2013minimal}. This data was preprocessed with bias field correction, functional realignment, motion correction and spatial normalization to Montreal Neurological Institute (MNI-152) standard space with 2 mm of isotropic resampling resolution. We then removed the first ten points in the time series to avoid transient effects introduced by the scanner. Finally, we further denoised the data with the anatomical CompCor method, which involves regressing out potential noise confounds (specifically, five principal components of white matter activity, five principal components of cerebrospinal fluid activity, and 12 motion parameters including head translation, rotation, and their temporal derivatives). All preprocessing steps were performed using the CONN toolbox (\url{https://www.nitrc.org/projects/conn/}), version 17f58. The resulting volume was parcellated according to the Schaefer-100 atlas~\cite{schaefer2018local} by spatially averaging across all voxels in the same region for each timestep.

\begin{table}[ht!]
    \caption{Time taken for experiments in Figure~\ref{fig:exp}.}
    \centering
    \begin{tabular}{|l|l|l|l|l|l|l|l|l|}
    \hline
          & SOM      & VIS     & SAL  & DAN    & DMN     & FPN    & LIM & Random \\ \hline
    2-way & 1s       & 2s      & 1s   & 2s     & 2s      & 2s     & 1s  & 12s    \\ \hline
    3-way & 12s      & 18s     & 7    & 13s    & 8s      & 8s     & 1s  & 13s    \\ \hline
    4-way & 5m36s    & 5m   & 2m44s   & 2m31s  & 2m      & 2m10s  & 1s  & 2m41s  \\ \hline
    5-way & 2h18m24s & 1h30m9s & 1h3m & 54m40s & 1h12m4s & 27m16s & 2s  & 29m    \\ \hline
    \end{tabular}
    \label{tab:Computational time for neuroimaging experiments}
\end{table}

We report the computational time for all fMRI experiments in Table~\ref{tab:Computational time for neuroimaging experiments}. All experiments carried out on a 2015 iMac with 4 GHz Quad-Core Intel Core i7 processor and 32 GB 1867 MHz DDR3 memory. 

\subsection{Analysis of Simple Hypergraphs}
As an illustration of future lines of work, we hierarchically constructed hypergraphs from the identified $d$-order interactions. Specifically, individual regions are nodes and the high-order interactions are used to define hyperedges, incrementally including the interactions of increasing order. The resulting hypergraphs integrate information across orders of interaction, and can be analysed by computing structural properties of hypergraphs. As an example, we show the degree assortativity of the different hypergraphs in Fig~\ref{fig:fmri_agg}. We find that almost all RSNs display a negative degree assortativity (but much less so than random), and only FPN shows positive degree assortativity for higher order hypergraphs. Future analyses of these hypergraphs will study the links between high-order interactions in brain activity and different functional areas.

\begin{figure}[h!]
\centering
\includegraphics[width=.9\textwidth]{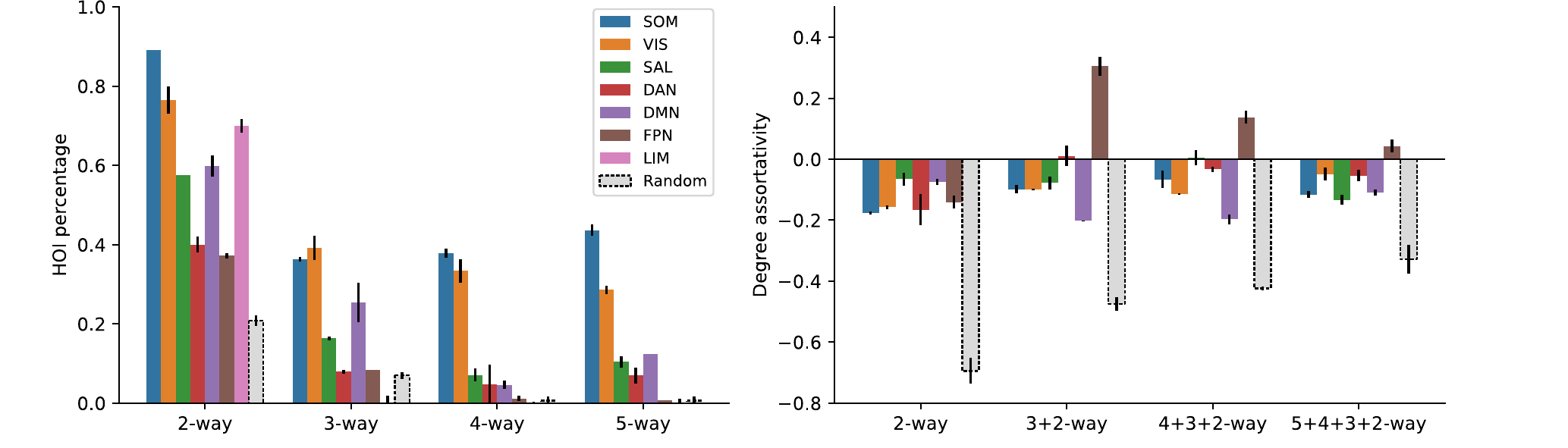}
\caption{\textbf{Hypergraph analysis of the neuroimaging data.} Percentage of high-order interactions in RSNs (left) and degree assortativity of hypergraphs constructed from the high-order interactions detected (right).}
\label{fig:fmri_agg}
\end{figure}

\subsection{Alternative Hypergraphs: Emergent and Redundant High-Order Interactions}
To interpret the presence of lower order interactions in larger cliques, 
one can alternatively build a hypergraph as follows: i) If a $d$-order hyperedge is detected and none of the lower order $(d-1)$-order hyperedges are present then we say that this $d$-order hyperedge reflects a purely synergistic (or emergent) interaction between the $d$ variables; ii) if the $d$-order and all the lower order interactions are present, then we say that the $d$-order interaction is purely redundant (and corresponds to a simplicial complex construction); iii) if some, but not all, lower order interactions are present (e.g., for 4 variables, only the 4-way interaction and one 3-way interaction are present), then both synergy and redundancy are present in this group of variables. Such a construction could offer an alternative, statistically-motivated approach to computing synergy and redundancy, an important current topic of research in computational neuroscience, and its representation through hypergraphs.



\section{Code}
This code for performing the interaction tests and the synthetic experiments is provided in this anonymous Github repository \url{https://github.com/barahona-research-group/streitberg-interaction.git}.


\end{document}